\documentclass[10pt,twocolumn,letterpaper]{article}

\usepackage{cvpr}              
\definecolor{cvprblue}{rgb}{0.21,0.49,0.74}
\usepackage[breaklinks,colorlinks,allcolors=cvprblue]{hyperref}

\usepackage{algorithm}
\usepackage{algorithmic}
\usepackage{amsmath}
\usepackage{amsthm} 
\usepackage{amsfonts}
\usepackage{booktabs}  
\usepackage{multirow}  
\usepackage{graphicx}  
\usepackage{dsfont}
\newtheorem{theorem}{Theorem}[section]

\newtheorem{proposition}[theorem]{Proposition}

\numberwithin{equation}{section}

\DeclareMathOperator*{\argmin}{arg\,min}
\def\paperID{30599}
\def\confName{CVPR}
\def\confYear{2026}

\title{Conditional Factuality Controlled LLMs with Generalization Certificates via Conformal Sampling}

\author{
Kai Ye \quad Qingtao Pan \quad Shuo Li\thanks{Corresponding author.}\\
Case Western Reserve University\\
{\tt\small Kai.Ye@pitt.edu \quad qingtaopan33@gmail.com \quad shuo.li11@case.edu}
}

\begin{document}
\maketitle
\begin{abstract}
Large language models (LLMs) need reliable test-time control of hallucinations. Existing conformal methods for LLMs typically provide only \emph{marginal} guarantees and rely on a single global threshold, which can under-cover hard prompts, over-cover easy ones, and produce oversized prediction sets. We propose \emph{Conditional Factuality Control} (CFC), a post-hoc conformal framework that returns \emph{set-valued} outputs with \emph{conditional} coverage guarantees. CFC defines a continuous, feature-conditional acceptance threshold through augmented quantile regression on a latent ``success'' score, and deploys it through a fixed-point threshold rule at inference time.
Theoretically, we show that CFC satisfies a conditional coverage guarantee under exchangeability and analyze its \emph{efficiency}, proving that, under mild assumptions on the score distributions, the conditional rule is strictly more sample-efficient than marginal conformal prediction at the same target coverage. We further derive a PAC-style variant, CFC-PAC, which shrinks the nominal risk level based on a stability bound, yielding a finite-sample certificate that the conditional miscoverage deviates from the target by at most $O(\sqrt{\log(1/\delta)/N})$. Empirically, on synthetic data, real-world reasoning and QA benchmarks, and a Flickr8k VLM setting, CFC and CFC-PAC consistently attain near-target coverage across difficulty groups while using smaller prediction sets than CP and non-CP baselines.
\end{abstract}
    
\section{Introduction}
Large language models (LLMs) have delivered striking progress across reasoning and generation tasks~\cite{wei2022emergent,bubeck2023sparks}, yet their outputs can be unreliable due to hallucinations~\cite{huang2025survey}. Inference-time strategies that invest more compute on sampling often improve accuracy~\cite{cobbe2021training,brown2024large}, but they do not provide formal reliability guarantees. For safety‑critical or high‑stakes applications, such heuristics are insufficient: we need procedures that make uncertainty explicit and can control error rates.

\begin{figure}[t]
    \centering
    \includegraphics[width=\linewidth]{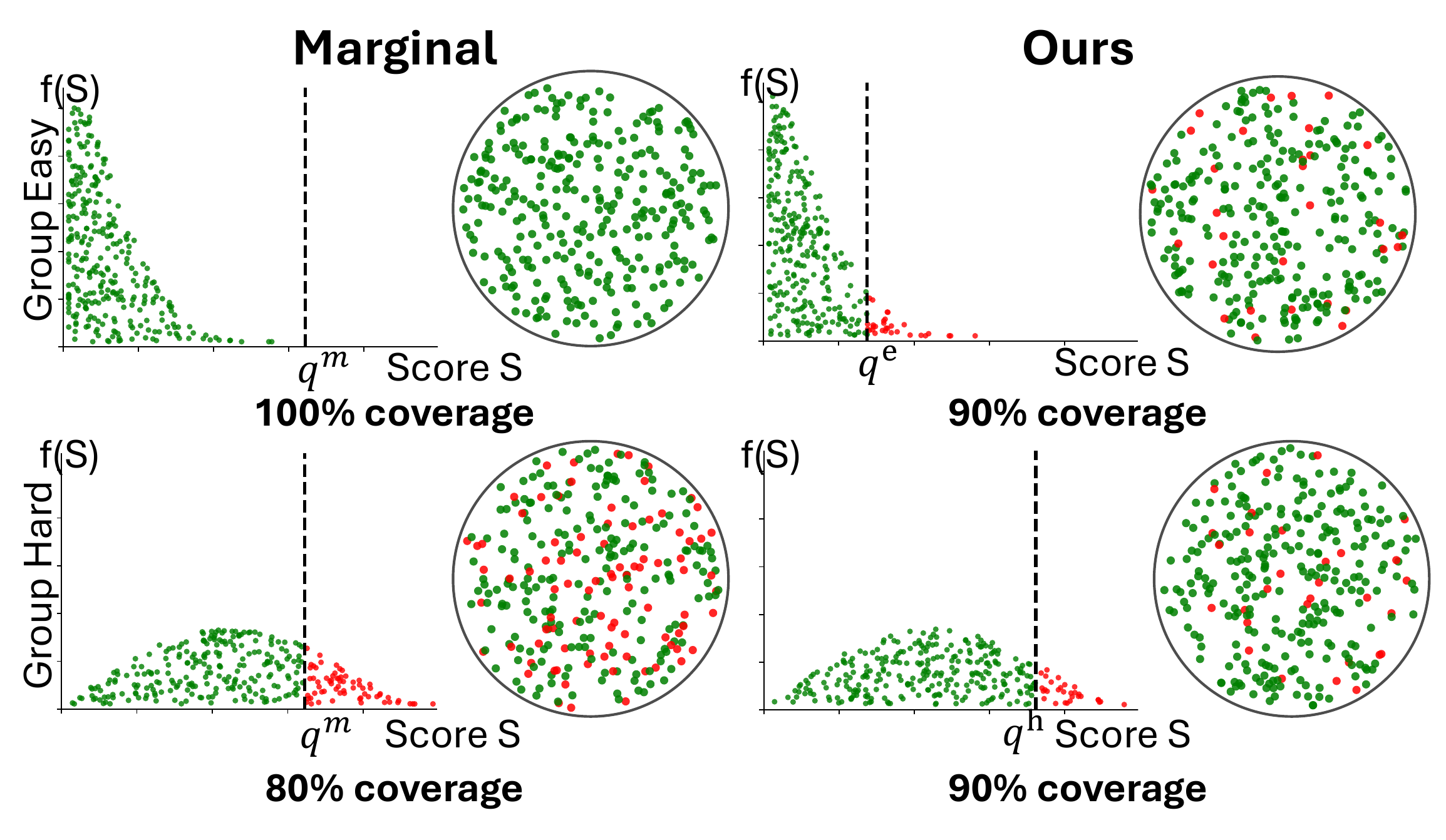}
    \caption{Limitation of marginal CP and advantage of proposed CFC. \textbf{Left:} A single global threshold learned from the marginal score mixture yields only marginal coverage and can under‑cover hard prompts while over‑covering easy ones. \textbf{Right:} Our CFC learns a data‑dependent threshold via conformal quantile regression, adapting the acceptance level to input features and achieving conditional coverage across subgroups.}
    \label{fig:image2}
\end{figure}

Among uncertainty-estimation methods, conformal prediction (CP) is a natural choice for adapting uncertainty control to LLMs, complementing broader efforts on trustworthy uncertainty modeling and multimodal prediction~\cite{vovk2005algorithmic,ye2024uncertainty,ye2025bpen,ye2023bmcl,sensoy2018evidential}. It is model‑agnostic and distribution‑free; under exchangeability between calibration and test examples, it constructs set‑valued predictions that contain the true candidate with probability at least $1-\alpha$ for a user‑specified risk level $\alpha$~\cite{vovk2005algorithmic}. Recent work adapts CP to LLMs by constructing sets of sampled responses that aim to contain at least one correct answer with high probability~\cite{quach2024conformal,kumar2023conformal,su-etal-2024-api}. However, these methods typically rely on a \emph{single global threshold} on scores, and so they only provide \emph{marginal} guarantees: coverage holds on average over prompts, not for prompts with particular characteristics.

This marginal coverage can hide severe heterogeneity. Hard prompts (e.g., long math questions or rare entities) may be systematically under‑covered, while easy prompts are over‑covered, potentially inflating prediction sets unnecessarily (Fig.~\ref{fig:image2}, left). A global threshold is forced to compromise between easy and hard regions of the feature space, leading to miscalibration in subgroups and inefficient use of samples.


\textit{Motivation.}
To address this failure mode, we seek \emph{conditional coverage}: guarantees that coverage holds not only on average, but also when conditioning on relevant features (or groups) of the prompt. Conditional coverage is strictly stronger than marginal coverage and directly targets reliability in under‑represented or systematically hard subpopulations (Fig.~\ref{fig:image2}, right). At the same time, we would like prediction sets to remain compact on average so that sampling‑based inference stays computationally practical.

We tackle these challenges with \emph{Conditional Factuality Control} (CFC), a post-hoc conformal layer for LLM sampling. Rather than using a single scalar threshold, CFC defines a \emph{feature-conditional acceptance rule} $\widehat{\lambda}_\alpha(X)$ by conformalizing a quantile-regression model for a latent success score $S(X)$, namely the best score among correct candidates for prompt $X$. At test time, given a prompt $X$, we sample candidates from the base generator and accept those whose scores satisfy $V(X,y)\le \widehat{\lambda}_\alpha(X)$. This requires no finetuning of the base model, and lets the acceptance threshold adapt to prompt difficulty.

Beyond the basic procedure, we develop a \emph{PAC-style} variant, \emph{CFC-PAC}, that adds a stability-based finite-sample certificate: with high probability over the draw of calibration sample, the deployed rule achieves coverage at least $1-\alpha-\varepsilon_N(\delta)$, where $\varepsilon_N(\delta)=O\!\big(\sqrt{\log(1/\delta)/N}\big)$.

Finally, we study the \emph{efficiency} of CFC. Under natural assumptions relating prompt difficulty to the distribution of scores, we show that an oracle conditional rule can attain smaller expected prediction-set size than marginal CP rule at the same target coverage. Our learned CFC asymptotically inherits this oracle efficiency as the augmented quantile regression becomes consistent.

\noindent
Our main contributions are:
\begin{itemize}
    \item We introduce \textbf{CFC}, a conformal procedure for sampled LLM outputs that defines a continuous, feature-conditional acceptance rule via augmented quantile regression on the latent success score and yields prediction sets satisfying the conditional guarantee of~\cite{gibbs2024conformal} under exchangeability.
    \item We develop \textbf{CFC-PAC}, a certified variant of CFC equipped with a PAC-style generalization bound: with probability at least $1-\delta$ over the calibration sample, the deployed rule achieves coverage at least $1-\alpha-\varepsilon_N(\delta)$ for an $\varepsilon_N(\delta)=O\!\big(\sqrt{\tfrac{\log(1/\delta)}{N}}\big)$.
    \item We analyze the \textbf{efficiency} of CFC, proving that, under mild monotonicity and concavity assumptions on the score distribution, conditional rules can be strictly more sample-efficient than marginal CP rules at the same coverage level.
    \item We validate CFC and CFC-PAC on synthetic data, real-world reasoning / QA benchmarks, and a Flickr8k VLM setting, showing that the same post-hoc procedure extends beyond text-only generators without finetuning the base model.
\end{itemize}

\noindent\textbf{Code availability.} Code for reproducing our experiments is available \href{https://github.com/FlynnYe/CFC-LLMs}{here}.

\section{Related Work}

\paragraph{Inference-time sampling and reranking for LLMs.}
A common way to improve LLM outputs with extra test-time compute is to sample multiple candidates and rerank or filter them, as in Best-of-$N$ decoding and pass@$N$ evaluation~\cite{cobbe2021training,chen2021evaluating,brown2024large}. In practice, candidate quality is often estimated with an external verifier or reward model~\cite{cobbe2021training,hosseini2024v}. Our setting follows this line of work: we treat the base generator as a black box, sample multiple candidates, and use a verifier score to decide which candidates enter the final set.

\paragraph{Conformal prediction and conditional guarantees.}
Conformal prediction provides distribution-free finite-sample coverage under exchangeability~\cite{vovk2005algorithmic,shafer2008tutorial}. Split/inductive conformal prediction (ICP) is the standard practical variant, but its guarantee is marginal: it controls coverage only on average over test inputs~\cite{papadopoulos2002inductive}. Exact conditional coverage is impossible without additional assumptions or relaxations~\cite{vovk2012conditional,foygel2021limits}. Recent work therefore studies weaker but useful coverages, including conformalized quantile regression~\cite{romano2019conformalized} and the function-class conditional framework of Gibbs et al.~\cite{gibbs2024conformal}, which learns feature-dependent thresholds through augmented quantile regression. Our method builds on this latter perspective.

\paragraph{Conformal prediction for LLMs.}
Several recent papers adapt CP to language models by constructing sets of sampled responses that contain at least one correct answer with high probability~\cite{kumar2023conformal,quach2024conformal,su-etal-2024-api,cherian2024large,mohri2024language}. Most existing LLM-specific methods use a \emph{global} acceptance rule and therefore inherit only marginal coverage, which can under-cover hard prompts and over-cover easy ones. Relative to these marginal baselines, CFC replaces the single threshold with a feature-conditional one and is therefore designed to improve subgroup reliability. Compared with prior conditional CP works, our contributions are different in emphasis: we develop conformal factuality control for sampled LLM candidates with verifier scores, provide an efficiency analysis showing when conditional rules are more sample-efficient than marginal ones, introduce the PAC-certified variant CFC-PAC, and demonstrate transfer to a VLM setting in the main experiments. Because CFC is purely post hoc, it requires no generator finetuning and transfers across base models.

\section{Preliminaries}
\label{sec:setup}

\subsection{Conformal Factuality}
Let $X\in\mathcal X$ be a prompt and let
$\pi:\mathcal X\to\Delta(\mathcal Y)$ denote a fixed generator over candidate completions.
For each prompt, we sample a candidate set
$C(X)=\{Y_j\}_{j=1}^M$ with $Y_j\sim\pi(\cdot\mid X)$,
and evaluate each candidate with a verifier score
$V:\mathcal X\times\mathcal Y\to[0,1]$, normalized so that \emph{smaller scores are better}.
Given a correctness indicator $A(X,y)\in\{0,1\}$, the conformal factuality goal is to output a set
$\widehat C_\alpha(X)$ satisfying
\begin{equation}
\mathbb{P}\!\left(\exists y \in \widehat C_\alpha(X_{n+1}) : A(X_{n+1},y)=1\right) \ge 1-\alpha.
\end{equation}
A convenient latent variable for this event is the \emph{success score}
\begin{equation}
S(X)
:=
\inf\{\,V(X,y): y\in C(X),\ A(X,y)=1\,\},
\end{equation}
so that the event that the prediction set contains at least one correct answer is equivalent to
$S(X)\le \lambda(X)$ for the deployed threshold rule $\lambda(\cdot)$.

\subsection{From Marginal to Conditional Coverage}
\paragraph{Split conformal prediction.}
Standard split conformal prediction calibrates a single global threshold from held-out calibration scores and therefore provides only \emph{marginal} coverage. Given exchangeable data $\{(X_i,Y_i)\}_{i=1}^{n+1}$ and a nonconformity score $s(x,y)$, it constructs a prediction set $\widehat C_\alpha(X_{n+1})$ satisfying
\begin{equation}
\mathbb P\!\left(Y_{n+1}\in \widehat C_\alpha(X_{n+1})\right)\ge 1-\alpha.
\end{equation}
Using a calibration set $\{(X_i,Y_i)\}_{i=1}^n$, one computes calibration scores $R_i=s(X_i,Y_i)$ and forms the empirical quantile
\begin{equation}
\widehat q_{1-\alpha}:=R_{(\lceil (n+1)(1-\alpha)\rceil)}
\end{equation}
and predicts
\begin{equation}
\widehat C_\alpha(x):=\{y:s(x,y)\le \widehat q_{1-\alpha}\}.
\end{equation}
In our setting, this corresponds to accepting all sampled candidates with verifier score below one constant cutoff. Such a rule can over-cover easy prompts and under-cover hard prompts, because the same acceptance level must serve the entire prompt distribution.

\paragraph{Conditional conformal prediction.}
The limitation of marginal CP is that it averages over the prompt distribution. A stronger objective is pointwise conditional coverage,
\begin{equation}
\mathbb P\!\left(Y_{n+1}\in \widehat C(X_{n+1}) \mid X_{n+1}=x\right)\ge 1-\alpha,
\end{equation}
but exact finite-sample conditional coverage is impossible without additional assumptions or relaxations~\cite{vovk2012conditional,foygel2021limits}. Following Gibbs et al.~\cite{gibbs2024conformal}, we instead target function-class conditional coverage:
\begin{equation}
\begin{aligned}
\mathbb{E}\!\Big[&f(X_{n+1})\big(\mathds{1}\{\exists y\in \widehat C(X_{n+1}) : A(X_{n+1},y)=1\}\\
&-(1-\alpha)\big)\Big]=0
\end{aligned}
\end{equation}
for all $f$ in a chosen class $\mathcal F$.
When $\mathcal F=\{1\}$, this reduces to marginal conformal prediction; when
$\mathcal F=\{\Phi(\cdot)^\top\beta : \beta\in\mathbb{R}^d\}$,
it yields a feature-conditional guarantee over the basis $\Phi$.

CFC instantiates this framework for conformal factuality by learning a feature-dependent quantile of the latent success score $S(X)$ rather than a single global cutoff.
The main paper focuses on the procedure and its guarantees. Appendix~\ref{app:background} gives the detailed background about ICP and Gibbs.~\cite{gibbs2024conformal} work.

\section{Method}
\label{sec:method}

We now present \textbf{CFC}, our conditional conformal rule for sampled LLM outputs, and its PAC-style variant \textbf{CFC-PAC}. Appendices~\ref{app:background} and \ref{app:proofs} contain the extended background, derivations, and proofs.

\subsection{CFC: Conditional Factuality Control}
\label{subsec:cfc}

Let $\{(X_i,S_i)\}_{i=1}^N$ be calibration prompt/score pairs, where $S_i=S(X_i)$ is the latent success score from Sec.~\ref{sec:setup}. For a test prompt $X_{N+1}$, we sample $C_{N+1}=\{Y_{N+1,j}\}_{j=1}^M$ from $\pi(\cdot\mid X_{N+1})$ and deploy a prompt-dependent threshold on verifier scores:
\begin{equation}
\widehat C_{\alpha}(X_{N+1};\lambda)
:=
\{y\in C_{N+1}: V(X_{N+1},y)\le \lambda\}.
\end{equation}

To learn this threshold, we follow the augmented quantile-regression construction of~\cite{gibbs2024conformal}. For a candidate test-time score $s\in[0,1]$, define
\begin{equation}
\label{eq:aug-qr-obj}
\begin{aligned}
\beta_s
=
\argmin_{\beta\in\mathbb{R}^d}\Bigg[
\frac{1}{N+1}\sum_{i=1}^{N}\rho_{1-\alpha}\!\left(S_i-\Phi(X_i)^\top\beta\right)
\\
+\frac{1}{N+1}\rho_{1-\alpha}\!\left(s-\Phi(X_{N+1})^\top\beta\right)\Bigg],
\end{aligned}
\end{equation}
where $\Phi(X)$ is the chosen feature map and $\rho_{1-\alpha}(u)=u(1-\alpha-\mathds 1\{u<0\})$ is the pinball loss. The induced test-time map is
\[
g_{X_{N+1}}(s):=\Phi(X_{N+1})^\top\beta_s.
\]
We then take the largest fixed point below this map as the deployed threshold:
\begin{equation}
\label{eq:pre_set}
\widehat\lambda_\alpha(X_{N+1})
:=
\sup\{s\in[0,1]: s\le g_{X_{N+1}}(s)\},
\end{equation}
and return
\begin{equation}
\widehat C_\alpha(X_{N+1})
:=
\{y\in C_{N+1}: V(X_{N+1},y)\le \widehat\lambda_\alpha(X_{N+1})\}.
\end{equation}
Although Eqs.~\eqref{eq:aug-qr-obj}--\eqref{eq:pre_set} define CFC conceptually through an augmented quantile-regression family, deployment only requires computing the fixed-point threshold in Eq.~\eqref{eq:pre_set}; it does not require repeatedly solving Eq.~\eqref{eq:aug-qr-obj} on a grid of candidate scores (see Gibbs.~\cite{gibbs2024conformal}).

\begin{algorithm}[t]
\small
\caption{CFC Inference}
\label{alg:infer}
\begin{algorithmic}[1]
\REQUIRE Calibration pairs $\{(X_i,S_i)\}_{i=1}^{N}$, test prompt $x$, generator $\pi$, verifier $V$, feature map $\Phi$, nominal significance $\alpha$, sample budget $M$
\STATE Draw $C(x)=\{Y_j\}_{j=1}^M$ with $Y_j\sim\pi(\cdot\mid x)$
\STATE Let $\rho_{1-\alpha}(u)=u(1-\alpha-\mathds 1\{u<0\})$
\STATE For $s\in[0,1]$, define
\STATE $\displaystyle
\beta_s=\argmin_{\beta\in\mathbb R^d}\Bigg[\frac{1}{N+1}\sum_{i=1}^N\rho_{1-\alpha}\!\left(S_i-\Phi(X_i)^\top\beta\right)+\frac{1}{N+1}\rho_{1-\alpha}\!\left(s-\Phi(x)^\top\beta\right)\Bigg]$
\STATE Define $g_x(s)=\Phi(x)^\top\beta_s$
\STATE Compute $\widehat\lambda_\alpha(x)=\sup\{s\in[0,1]: s\le g_x(s)\}$
\STATE Set $\widehat C_\alpha(x)=\{y\in C(x): V(x,y)\le \widehat\lambda_\alpha(x)\}$
\RETURN $\widehat\lambda_\alpha(x),\widehat C_\alpha(x)$
\end{algorithmic}
\end{algorithm}

\begin{theorem}[Conditional coverage of \textsc{CFC}]
\label{thm:cfc_cov}
Let $\mathcal{F}=\{\Phi(X)^\top\beta: \beta\in\mathbb{R}^d\}$ be any finite-dimensional linear class, and assume exchangeability. Then for any non-negative $f\in\mathcal F$ with $\mathbb E[f(X)]>0$, the prediction set in Eq.~\eqref{eq:pre_set} satisfies
\begin{equation}
\mathbb P_f\!\left(\exists y\in\widehat C_\alpha(X_{N+1}) : A(X_{N+1},y)=1\right)\ge 1-\alpha.
\end{equation}
\end{theorem}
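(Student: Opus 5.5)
The plan is to reduce the claim to the Gibbs et al.\ augmented quantile-regression argument, applied to the latent success score $S$. Here $\mathbb P_f$ is the $f$-reweighted law, $d\mathbb P_f/d\mathbb P=f(X)/\mathbb E[f(X)]$. It therefore suffices to show
\[
\mathbb E\big[f(X_{N+1})\,\mathds 1\{S_{N+1}\le \widehat\lambda_\alpha(X_{N+1})\}\big]\ \ge\ (1-\alpha)\,\mathbb E[f(X_{N+1})].
\]
By the definition of $S(X)$ in Sec.~\ref{sec:setup}, the event $\{S_{N+1}\le\widehat\lambda_\alpha(X_{N+1})\}$ coincides with $\{\exists y\in\widehat C_\alpha(X_{N+1}):A(X_{N+1},y)=1\}$. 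Because $C$ is finite, the infimum is a minimum; if no candidate is correct, $S=+\infty$ and both events fail.

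\textbf{Step 1 (plug in the true score).} Consider $\beta^*:=\beta_{S_{N+1}}$, the augmented fit obtained with the true test score $s=S_{N+1}$. This is the quantile regression on the full exchangeable sample $\{(X_i,S_i)\}_{i=1}^{N+1}$. If $S_{N+1}\ge 1$ we first clip scores to $[0,1]$; since $V\in[0,1]$, clipping does not change the coverage event.

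Write the first-order (subgradient) optimality condition of the convex pinball objective in direction $\beta\in\mathbb R^d$. It says that for every $f=\Phi^\top\beta$ there exist weights $\eta_i\in[-\alpha,1-\alpha]$ on the interpolated points $S_i=\Phi(X_i)^\top\beta^*$ such that
\[
\tfrac1{N+1}\sum_{i=1}^{N+1} f(X_i)\big(1-\alpha-\mathds 1\{S_i> \Phi(X_i)^\top\beta^*\}\big)\ \text{(up to interpolation terms)}=0.
\]
For non-negative $f$ this yields
\[
\tfrac1{N+1}\sum_i f(X_i)\,\mathds 1\{S_i\le \Phi(X_i)^\top\beta^*\}\ \ge\ (1-\alpha)\tfrac1{N+1}\sum_i f(X_i).
\]

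\textbf{Step 2 (exchangeability).} The map from the multiset $\{(X_i,S_i)\}$ to $\beta^*$ is symmetric. To make it well defined under non-uniqueness, use a symmetric tie-breaking rule in the $\argmin$. Then $(X_{N+1},S_{N+1},\beta^*)$ has the same joint law as $(X_i,S_i,\beta^*)$ for every $i$. Taking expectations in the Step~1 inequality gives
\[
\mathbb E\big[f(X_{N+1})\mathds 1\{S_{N+1}\le g_{X_{N+1}}(S_{N+1})\}\big]\ge(1-\alpha)\mathbb E[f(X_{N+1})].
\]

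\textbf{Step 3 (link to the fixed point).} If $S_{N+1}\le g_{X_{N+1}}(S_{N+1})$ and $S_{N+1}\in[0,1]$, then $S_{N+1}$ belongs to the set in Eq.~\eqref{eq:pre_set}. Hence $S_{N+1}\le\widehat\lambda_\alpha(X_{N+1})$, so the event in Step~2 is contained in the coverage event. Dividing by $\mathbb E[f(X)]>0$ gives the claim.

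The main obstacle I expect is Step 3's monotonicity-free inclusion together with the non-uniqueness of $\beta_s$. The inclusion itself needs no monotonicity of $g$, since the supremum dominates any feasible point. Non-uniqueness, however, must be handled by a symmetric selection so that Step 2's exchangeability argument applies. The same selection must also be the one used in $g_x$, so that Step 3 is consistent. I would follow Gibbs et al.\ and note that the subgradient inequality holds for every minimizer, so any measurable symmetric choice works.
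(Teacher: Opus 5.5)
Your proposal is correct and follows the paper's proof essentially step for step. You plug the true test score into the augmented quantile regression and use first-order optimality along $f=\Phi^\top\gamma\ge 0$ to get the $f$-weighted empirical coverage inequality over all $N+1$ points. You then average using exchangeability and pass to $\{S\le\widehat\lambda_\alpha(X_{N+1})\}$ because the supremum in Eq.~\eqref{eq:pre_set} dominates every feasible $s$. Your symmetric tie-breaking makes explicit what the paper leaves implicit.

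Two corrections are needed. First, clipping $S_{N+1}=+\infty$ to $1$ \emph{does} change the event when $\widehat\lambda_\alpha(X_{N+1})=1$: every candidate is accepted and none is correct, yet the clipped event holds. In fact the bound cannot hold when $\mathbb P_f(S=+\infty)>\alpha$, so you need $S\in[0,1]$ almost surely, which is what the paper tacitly assumes. Second, your displayed stationarity condition has the indicator reversed: it should contain $\mathds 1\{S_i<\Phi(X_i)^\top\beta^*\}$, not $\mathds 1\{S_i>\Phi(X_i)^\top\beta^*\}$. The inequality you draw from it is nonetheless the right one.
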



\subsection{CFC-PAC: A High-Probability Certificate}
\label{subsec:pac}

Theorem~\ref{thm:cfc_cov} is an expectation-level guarantee over the calibration draw. To certify the deployed rule itself, we add ridge regularization to Eq.~\eqref{eq:aug-qr-obj} and shrink the nominal target by a stability slack.

\paragraph{Assumptions.}
We assume: (1) bounded features $\|\Phi(X)\|_2\le R$ almost surely; (2) ridge regularization $\frac{\lambda}{2}\|\beta\|_2^2$ with $\lambda>0$ in the augmented quantile-regression objective; and (3) the conditional CDF $F_{S\mid X=x}(t)$ is $L$-Lipschitz in $t$ on $[0,1]$. Exchangeability is as in Theorem~\ref{thm:cfc_cov}.

\begin{theorem}[PAC conditional coverage for CFC]
\label{thm:pac-cfc}
Assume $\alpha \ge \varepsilon_N(\delta)$ and define
$
\alpha_{\mathrm{eff}} := \alpha - \varepsilon_N(\delta).
$ Let $\widehat\lambda_{\alpha_{\mathrm{eff}}}(\cdot)$ be the threshold learned from algorithm~\ref{alg:infer-pac}. Then for any $\delta\in(0,1)$, with probability at least $1-\delta$ over the calibration sample,
\[
\mathbb P\!\left(S\le \widehat\lambda_{\alpha_{\mathrm{eff}}}(X)\mid \mathcal D_{\mathrm{cal}}\right)
\ge
1-{\alpha_{\mathrm{eff}}}-\varepsilon_N(\delta)
=
1-\alpha,
\]
where
\[
\varepsilon_N(\delta)=O\!\left(\sqrt{\frac{\log(1/\delta)}{N}}\right).
\]
Equivalently, with the same probability,
\[
\mathbb P\!\left(\exists y\in\widehat C_{\alpha_{\mathrm{eff}}}(X): A(X,y)=1\mid \mathcal D_{\mathrm{cal}}\right)
\ge
1-\alpha.
\]
\end{theorem}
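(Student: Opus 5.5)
The plan is to compare the random conditional coverage $G(\mathcal D_{\mathrm{cal}}) := \mathbb P(S\le \widehat\lambda_{\alpha_{\mathrm{eff}}}(X)\mid \mathcal D_{\mathrm{cal}})$ with its expectation, and to control the gap by a stability-based concentration inequality. The argument has three steps: a lower bound on the mean, a bounded-differences property, and McDiarmid's inequality.

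\textbf{Mean.} Take $f\equiv 1$ in Theorem~\ref{thm:cfc_cov}, at nominal level $\alpha_{\mathrm{eff}}$. The constant function lies in $\mathcal F$ provided $\Phi$ contains an intercept, which we assume. This gives $\mathbb E[G(\mathcal D_{\mathrm{cal}})]\ge 1-\alpha_{\mathrm{eff}}$. The ridge term breaks the exact first-order optimality argument of Gibbs et al., so the bound only holds up to a bias term. From the subgradient condition, this term is of order $\lambda\|\beta\|_2 R/(N+1)$, or is absorbed if $\lambda$ scales as $O(1/\sqrt N)$. I would fold it into $\varepsilon_N(\delta)$.

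\textbf{Stability.} Let $\mathcal D'$ be obtained from $\mathcal D_{\mathrm{cal}}$ by replacing one calibration pair $(X_i,S_i)$. The regularized objective is $\frac{\lambda}{2}$-strongly convex, and the pinball loss is $\max(\alpha,1-\alpha)\le 1$-Lipschitz in $\Phi(X)^\top\beta$. The standard uniform-stability argument of Bousquet--Elisseeff then gives
\[
\|\beta_s-\beta_s'\|_2\le \frac{2R}{\lambda(N+1)} \quad\text{uniformly in } s.
\]
Consequently $|g_x(s)-g_x'(s)|\le 2R^2/(\lambda(N+1))$ for every $s$ and every $x$ with $\|\Phi(x)\|\le R$.

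From this I need to transfer the bound to the fixed point $\widehat\lambda(x)=\sup\{s: s\le g_x(s)\}$. Shifting $g_x$ up or down by $\eta$ moves the supremum only within the set where $s-g_x(s)\in[-\eta,\eta]$. To bound $|\widehat\lambda(x)-\widehat\lambda'(x)|\lesssim \eta$, I need $s\mapsto s-g_x(s)$ to increase at least linearly. I expect this to be the main obstacle. The map $g_x$ is nondecreasing in $s$ and piecewise linear, with slope at most the leverage $\Phi(x)^\top H^{-1}\Phi(x)/(N+1)$, which is $O(R^2/(\lambda N))<1$ for large $N$. This slope bound would give the linear increase, with a constant depending only on $R$ and $\lambda$.

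The Lipschitz CDF assumption then yields
\[
|G(\mathcal D)-G(\mathcal D')|\le \mathbb E_X\bigl|F_{S\mid X}(\widehat\lambda(X))-F_{S\mid X}(\widehat\lambda'(X))\bigr|\le L c_\lambda R^2/N =: c/N.
\]

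\textbf{Concentration.} McDiarmid's inequality with differences $c/N$ gives, with probability at least $1-\delta$,
\[
G(\mathcal D_{\mathrm{cal}})\ge \mathbb E[G]-c\sqrt{\log(1/\delta)/(2N)}.
\]
Define $\varepsilon_N(\delta)$ as this deviation plus the ridge bias, which is $O(\sqrt{\log(1/\delta)/N})$. Since $\alpha_{\mathrm{eff}}=\alpha-\varepsilon_N(\delta)$, we obtain $G\ge 1-\alpha_{\mathrm{eff}}-\varepsilon_N(\delta)=1-\alpha$.

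The second form of the statement follows directly from the equivalence $\{\exists y\in\widehat C: A=1\}=\{S\le\widehat\lambda(X)\}$ given in Sec.~\ref{sec:setup}.
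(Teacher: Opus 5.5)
Your route is the paper's: a mean bound from Theorem~\ref{thm:cfc_cov} with $f\equiv 1$, uniform stability of the ridge fit, transfer to $\widehat\lambda$, the Lipschitz CDF, then McDiarmid. You are right on two points the paper's proof ignores: $f\equiv 1\in\mathcal F$ needs an intercept, and the ridge term breaks the Gibbs first-order argument. (The paper simply invokes Theorem~\ref{thm:cfc_cov}, which is proved for the unregularized objective.) But your repair of the mean step fails. In Algorithm~\ref{alg:infer-pac} the penalty $\frac{\lambda}{2}\|\beta\|_2^2$ is added to the $\frac{1}{N+1}$-averaged losses. The one-sided derivative along the intercept direction $e_1$ then gives
\[
\frac{1}{N+1}\sum_{i=1}^{N+1}\mathds 1\{S_i\le \Phi(X_i)^\top\beta_{S_{N+1}}\}\ \ge\ 1-\alpha_{\mathrm{eff}}-\lambda\,(\beta_{S_{N+1}})_1 .
\]
So exchangeability only yields a deficit $\lambda\,\mathbb E[(\beta_{S_{N+1}})_1]$, with no factor $1/(N+1)$. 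The deficit is real. Take $\Phi\equiv 1$, $S\sim\mathrm{Unif}[0,1]$ independent of $X$, $\lambda=1$, $\alpha=0.1$; all assumptions hold with $R=L=1$. The fit solves $\beta\approx(1-\alpha_{\mathrm{eff}})-\beta$, so $\widehat\lambda\approx 0.45$ and the conditional coverage stays near $0.45$ for all large $N$. Your order $\lambda\|\beta\|_2R/(N+1)$ is what a penalty $\frac{\lambda}{2(N+1)}\|\beta\|_2^2$ would give. But then the strong-convexity modulus is $\lambda/(N+1)$, and your stability bound becomes $2R/\lambda$, with no decay. Taking $\lambda=O(N^{-1/2})$ instead makes each bounded difference $O(N^{-1/2})$, so McDiarmid's deviation is of order $\sqrt{\log(1/\delta)}$. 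Balancing a bias $\asymp\lambda$ against a deviation $\asymp\lambda^{-1}\sqrt{\log(1/\delta)/N}$ gives at best $(\log(1/\delta)/N)^{1/4}$. So the ridge bias cannot be folded into an $O(\sqrt{\log(1/\delta)/N})$ slack, and for fixed $\lambda$ the stated bound is in fact false.

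The fixed-point transfer also fails as argued. The leverage bound on the slope of $g_x$ is a squared-loss fact. With the pinball loss, the test point enters only through a bounded subgradient. Let $\tau=1-\alpha_{\mathrm{eff}}$, and let $\beta^+$ and $\beta^-$ minimize the ridge objective with the test loss replaced by its linear branches $\tau(s-\Phi(x)^\top\beta)$ and $(\tau-1)(s-\Phi(x)^\top\beta)$. The optimality conditions then give
\[
g_x(s)=\min\{\max\{s,s_-\},s_+\},\qquad s_\pm:=\Phi(x)^\top\beta^\pm .
\]
Strong monotonicity gives $0\le s_+-s_-\le\|\Phi(x)\|_2^2/(\lambda(N+1))$. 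So $g_x$ has slope exactly $1$ on $[s_-,s_+]$, where the test point is interpolated, and $s-g_x(s)$ is flat there; no linear-growth bound holds. (The paper's unproved claim that $\widehat\lambda$ is a sup-norm Lipschitz functional of $g$ is also false for general $g$.) The same formula gives the correct repair. When $s_+\ge 0$, $\widehat\lambda(x)=\min\{1,\Phi(x)^\top\beta^+(x)\}$, the fitted value of a single $\lambda$-strongly convex problem. Replacing one calibration pair therefore moves it by at most $2R^2/(\lambda(N+1))$, which is exactly what the bounded-differences step needs.
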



\begin{algorithm}[t]
\small
\caption{CFC-PAC Inference}
\label{alg:infer-pac}
\begin{algorithmic}[1]
\REQUIRE Calibration pairs $\{(X_i,S_i)\}_{i=1}^{N}$, test prompt $x$, generator $\pi$, verifier $V$, feature map $\Phi$, nominal significance $\alpha$, confidence $\delta$, ridge parameter $\lambda$, sample budget $M$
\STATE Compute the PAC slack $\varepsilon_N(\delta)$ from the PAC certificate
\STATE Set the effective target $\alpha_{\mathrm{eff}}=\max\{0,\alpha-\varepsilon_N(\delta)\}$
\STATE Similar to Algorithm~\ref{alg:infer}, run CFC with target $\alpha_{\mathrm{eff}}$, replacing the augmented objective by the ridge-regularized version below
\STATE $\displaystyle
\beta_s=\argmin_{\beta\in\mathbb R^d}\Bigg[\frac{1}{N+1}\sum_{i=1}^N\rho_{1-\alpha_{\mathrm{eff}}}\!\left(S_i-\Phi(X_i)^\top\beta\right)+\frac{1}{N+1}\rho_{1-\alpha_{\mathrm{eff}}}\!\left(s-\Phi(x)^\top\beta\right)+\frac{\lambda}{2}\|\beta\|_2^2\Bigg]$
\STATE This produces $\widehat\lambda_{\alpha_{\mathrm{eff}}}(x)$ and $\widehat C_{\alpha_{\mathrm{eff}}}(x)$
\RETURN $\widehat\lambda_{\alpha_{\mathrm{eff}}}(x),\widehat C_{\alpha_{\mathrm{eff}}}(x)$
\end{algorithmic}
\end{algorithm}

\subsection{Efficiency Analysis}
\label{subsec:efficiency}

Beyond coverage, we ask whether conditioning can reduce average prediction-set size. Let
\[
G_X(\lambda):=\mathbb P\big(V(X,Y)\le \lambda\mid X,\ Y\sim\pi(\cdot\mid X)\big)
\]
be the score CDF under sampling at prompt $X$. If we draw $M$ candidates and retain those with score at most $\lambda$, then
\[
\mathbb E\big[|\widehat C(X)|\mid X\big]=M\,G_X(\lambda).
\]
Thus efficiency reduces to comparing $\mathbb E[G_X(\lambda(X))]$ across threshold rules.

Marginal CP uses a constant threshold $\bar\lambda_\alpha$ satisfying $\mathbb P(S\le \bar\lambda_\alpha)\approx 1-\alpha$. In contrast, an oracle conditional rule uses the conditional quantile $q_\alpha(X)$ of $S\mid X$ and sets $\lambda^\star(X)=q_\alpha(X)$.
Let $T=\psi(X)$ be a scalar difficulty variable. For each $t$, define
$
F_t(\lambda):=\mathbb P(S\le \lambda\mid T=t),
\qquad
G_t(\lambda):=\mathbb P(V\le \lambda\mid T=t),
$
and
$
C_t(u):=G_t\!\big(F_t^{-1}(u)\big),
\qquad
u\in[0,1],
$
where
$
F_t^{-1}(u):=\inf\{\lambda\in[0,1]:F_t(\lambda)\ge u\}.
$
\paragraph{Assumptions.}
(1) For each $t$, $F_t$ is continuous and strictly increasing on $[0,1]$.
(2) For each fixed $\lambda\in[0,1]$, the map $t\mapsto F_t(\lambda)$ is nonincreasing.
(3) For each $t$, the map $u\mapsto C_t(u)$ is convex and differentiable on $(0,1)$.
(4) $C$ has decreasing differences in $(u,t)$, i.e. for every $u\in(0,1)$,$t\mapsto \partial_u C_t(u)$ is nonincreasing.
Under Assumptions 1--2, the conditional $(1-\alpha)$-quantile
$q_\alpha(t):=F_t^{-1}(1-\alpha)$
is automatically nondecreasing in $t$.

\begin{proposition}[Oracle CFC efficiency]
\label{prop:oracle-eff}
Let
\[
\lambda^\star(X):=q_\alpha(X)=F_X^{-1}(1-\alpha).
\]
Under the assumptions above,
\[
\mathbb E\big[G_X(\lambda^\star(X))\big]
\le
\mathbb E\big[G_X(\bar\lambda)\big]
\]
for any constant $\bar\lambda$ satisfying
\[
\mathbb P(S\le \bar\lambda)=1-\alpha.
\]
In particular,
\[
\mathbb E\big[G_X(\lambda^\star(X))\big]
\le
\mathbb E\big[G_X(\bar\lambda_\alpha)\big].
\]

If, in addition, $u\mapsto C_t(u)$ is strictly convex for almost every $t$ and
\[
\mathbb P\big(q_\alpha(X)\neq \bar\lambda_\alpha\big)>0,
\]
then the inequality is strict.
\end{proposition}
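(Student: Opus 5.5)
The plan is to reduce everything to the scalar difficulty variable $T=\psi(X)$ and then run a tangent-line (supporting hyperplane) argument for the convex maps $C_t$, combined with a Chebyshev-type covariance inequality for comonotone functions of $T$. First, both threshold rules in the statement depend on $X$ only through $T$: the oracle rule because $q_\alpha(X)$ is read as $q_\alpha(T)=F_T^{-1}(1-\alpha)$, and the marginal rule trivially. By the tower property, $\mathbb E[G_X(\lambda(T))]=\mathbb E[G_T(\lambda(T))]$ for any such rule, so it suffices to compare $\mathbb E[G_T(q_\alpha(T))]$ with $\mathbb E[G_T(\bar\lambda)]$.

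\emph{Step 1 (rewriting both sides through $C_t$).} Set $u_t:=F_t(\bar\lambda)\in[0,1]$. By Assumption 1, $F_t$ is continuous and strictly increasing, so $F_t^{-1}(F_t(\bar\lambda))=\bar\lambda$. Hence
\[
G_t(\bar\lambda)=C_t(u_t),\qquad G_t(q_\alpha(t))=C_t(1-\alpha).
\]
The calibration constraint becomes $\mathbb E[u_T]=\mathbb P(S\le\bar\lambda)=1-\alpha$. Moreover, by Assumption 2, $t\mapsto u_t$ is nonincreasing. The claim is therefore a Jensen-type inequality, $\mathbb E[C_T(1-\alpha)]\le \mathbb E[C_T(u_T)]$, for a family of convex functions that varies with $T$.

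\emph{Step 2 (tangent bound).} Since $1-\alpha\in(0,1)$ and $C_t$ is convex and differentiable there (Assumption 3), the tangent-line inequality gives, for all $u\in(0,1)$,
\[
C_t(u)\ge C_t(1-\alpha)+a(t)\,\big(u-(1-\alpha)\big),\qquad a(t):=\partial_u C_t(1-\alpha).
\]
The endpoints $u\in\{0,1\}$ are handled by continuity of $C_t$ on $[0,1]$, which follows from continuity of $G_t$ and $F_t^{-1}$. Taking expectations with $u=u_T$ yields
\[
\mathbb E[C_T(u_T)]-\mathbb E[C_T(1-\alpha)]\ge \mathbb E\big[a(T)(u_T-(1-\alpha))\big].
\]

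\emph{Step 3 (the key step: sign of the cross term).} Using $\mathbb E[u_T]=1-\alpha$,
\[
\mathbb E\big[a(T)(u_T-(1-\alpha))\big]=\operatorname{Cov}\big(a(T),u_T\big).
\]
By Assumption 4, $a(\cdot)$ is nonincreasing, and by Step 1, $u_\cdot$ is nonincreasing. Two functions of the same scalar $T$ that are monotone in the same direction have nonnegative covariance by Chebyshev's association inequality. One proof writes $2\operatorname{Cov}(a(T),u_T)=\mathbb E[(a(T)-a(T'))(u_{T}-u_{T'})]\ge 0$ with $T'$ an independent copy of $T$. This gives the main inequality for any admissible $\bar\lambda$, and in particular for $\bar\lambda_\alpha$.

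The step I expect to need the most care is this combination of the mixed-sign tangent term with monotonicity. Convexity alone only gives Jensen for a fixed function, and it is precisely Assumptions 2 and 4 that make the cross term nonnegative. I also need to check integrability of $a(T)$, which I would assume or obtain from boundedness of $C_t$ on compact subintervals.

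\emph{Strictness.} If $C_t$ is strictly convex for a.e.\ $t$, the tangent inequality in Step 2 is strict whenever $u_t\neq 1-\alpha$. By strict monotonicity of $F_t$, $u_t\neq 1-\alpha$ holds exactly when $\bar\lambda\neq q_\alpha(t)$. Under $\mathbb P(q_\alpha(X)\neq\bar\lambda_\alpha)>0$ this set has positive probability. Integrating a nonnegative gap that is strictly positive on a set of positive probability gives strict inequality, and the covariance term remains $\ge 0$.
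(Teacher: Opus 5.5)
Your proposal is correct and follows the paper's proof essentially step for step. Like the paper, you write both sides as $\mathbb E[C_T(\cdot)]$ with $u_t=F_t(\bar\lambda)$, apply the supporting-line inequality at $1-\alpha$, sign the cross term via Chebyshev's inequality for the two nonincreasing functions $t\mapsto\partial_u C_t(1-\alpha)$ and $t\mapsto F_t(\bar\lambda)$, and get strictness from strict convexity on the positive-probability event $\{F_T(\bar\lambda_\alpha)\neq 1-\alpha\}$. One side remark needs adjusting: continuity of $G_t$ is not assumed, but you do not need it, since the endpoint $u=1$ needs only monotonicity of $C_t$ and $u=0$ only right-continuity of $G_t$. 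The integrability you flag is automatic, because $C_t$ is nondecreasing with values in $[0,1]$, so convexity gives $0\le\partial_u C_t(1-\alpha)\le 1/\alpha$.
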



\begin{theorem}[CFC inherits oracle efficiency]
\label{thm:cfc-eff}
Assume the conditions of Proposition~\ref{prop:oracle-eff}, and let
\[
\lambda^\star(X):=q_\alpha(\psi(X)).
\]
Let $\widehat\lambda_{\alpha,N}(\cdot)$ denote the CFC threshold learned from a
calibration sample of size $N$, and suppose that
\[
\sup_{x\in\mathcal X}
\big|\widehat\lambda_{\alpha,N}(x)-\lambda^\star(x)\big|
\xrightarrow{p}0
\qquad\text{as }N\to\infty.
\]
Then
\[
\lim_{N\to\infty}\mathbb E\big[G_X(\widehat\lambda_{\alpha,N}(X))\big]
=
\mathbb E\big[G_X(\lambda^\star(X))\big]
\le
\mathbb E\big[G_X(\bar\lambda_\alpha)\big].
\]
Consequently, for fixed $M$,
\[
\lim_{N\to\infty}\mathbb E\big[|\widehat C_{\alpha,N}(X)|\big]
=
M\,\mathbb E\big[G_X(\lambda^\star(X))\big]
\le
M\,\mathbb E\big[G_X(\bar\lambda_\alpha)\big].
\]

If, in addition, the strictness conditions in
Proposition~\ref{prop:oracle-eff} hold, then
\[
\lim_{N\to\infty}\mathbb E\big[G_X(\widehat\lambda_{\alpha,N}(X))\big]
<
\mathbb E\big[G_X(\bar\lambda_\alpha)\big],
\]
and hence
\[
\lim_{N\to\infty}\mathbb E\big[|\widehat C_{\alpha,N}(X)|\big]
<
M\,\mathbb E\big[G_X(\bar\lambda_\alpha)\big].
\]
\end{theorem}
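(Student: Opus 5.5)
The plan is to reduce everything to Proposition~\ref{prop:oracle-eff} together with a convergence argument for the expectation. The inequality $\mathbb E[G_X(\lambda^\star(X))]\le\mathbb E[G_X(\bar\lambda_\alpha)]$, and its strict version under the strict convexity conditions, are exactly the conclusions of Proposition~\ref{prop:oracle-eff}. They do not depend on $N$, so the only new content is the limit
\[
\lim_{N\to\infty}\mathbb E\big[G_X(\widehat\lambda_{\alpha,N}(X))\big]=\mathbb E\big[G_X(\lambda^\star(X))\big].
\]
Here the expectation is taken jointly over the calibration sample and an independent test prompt $X$. The set-size statements then follow by multiplying by the fixed $M$, using $\mathbb E[|\widehat C(X)|\mid X,\mathcal D_{\mathrm{cal}}]=M\,G_X(\widehat\lambda_{\alpha,N}(X))$ and the tower property.

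For the limit, I will use the boundedness $0\le G_X\le 1$ and dominated convergence, in its convergence-in-probability form. It therefore suffices to show that
\[
\Delta_N:=\big|G_X(\widehat\lambda_{\alpha,N}(X))-G_X(\lambda^\star(X))\big|\xrightarrow{p}0.
\]
Convergence in probability plus uniform boundedness gives $\mathbb E[\Delta_N]\to 0$.

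To control $\Delta_N$, I need regularity of $\lambda\mapsto G_X(\lambda)$. The cleanest route is through the difficulty variable. I will read $G_X$ as $G_{\psi(X)}$, consistent with the efficiency setup, so that $G_t=C_t\circ F_t$. By Assumption~1, $F_t$ is continuous. By Assumption~3, $C_t$ is convex and differentiable on $(0,1)$, hence continuous there. So $G_t$ is continuous in $\lambda$ for each $t$, at least where $F_t(\lambda)\in(0,1)$, which holds near $\lambda^\star=F_t^{-1}(1-\alpha)$ when $\alpha\in(0,1)$. Pointwise continuity alone does not convert the uniform convergence $\sup_x|\widehat\lambda_{\alpha,N}(x)-\lambda^\star(x)|\xrightarrow{p}0$ into $\Delta_N\xrightarrow{p}0$. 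Instead, I will argue via a modulus of continuity. Fix $\eta>0$ and $\epsilon>0$, and define
\[
B_{\eta}:=\big\{x:\ \sup_{|\lambda-\lambda^\star(x)|\le\eta}|G_x(\lambda)-G_x(\lambda^\star(x))|>\epsilon\big\}.
\]
By continuity at $\lambda^\star(x)$, $\mathbb P(X\in B_\eta)\to0$ as $\eta\downarrow 0$. Then
\[
\mathbb P(\Delta_N>\epsilon)\le \mathbb P(X\in B_\eta)+\mathbb P\Big(\sup_x|\widehat\lambda_{\alpha,N}-\lambda^\star|>\eta\Big).
\]
The second term vanishes as $N\to\infty$ by hypothesis, and then I send $\eta\to0$. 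This is also where the test point being independent of the calibration data matters, since it makes the first term a fixed probability.

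The main obstacle I expect is precisely this continuity of $G_X$ at the oracle threshold. If $G_X$ had atoms (discrete verifier scores), the limit could fail at jump points. I would handle this by invoking Assumptions~1 and~3 as above. Alternatively, I would note that the paper's $G_X(\lambda)$ is a CDF of $V$, and assume or argue that it has no atom at $\lambda^\star(X)$ almost surely, which is all the argument above needs. With the limit established, I chain it with Proposition~\ref{prop:oracle-eff} for both the weak and strict inequalities, and multiply by $M$.
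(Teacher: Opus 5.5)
Your proposal is correct and follows essentially the same route as the paper: Proposition~\ref{prop:oracle-eff} supplies both the weak and the strict inequalities. Continuity of $G_t=C_t\circ F_t$ at $\lambda^\star(t)$ follows from Assumptions~1 and~3 because $F_t(\lambda^\star(t))=1-\alpha\in(0,1)$. Boundedness of $G_X$ upgrades $\Delta_N\xrightarrow{p}0$ to convergence of expectations, and the set-size claims follow from $\mathbb E[|\widehat C(X)|\mid X,\mathcal D_{\mathrm{cal}}]=M\,G_X(\widehat\lambda_{\alpha,N}(X))$. The only difference is how you get $\Delta_N\xrightarrow{p}0$: the paper fixes $x$, applies the continuous mapping theorem, and integrates $p_{N,\varepsilon}(X)$ over the independent test point by dominated convergence, whereas your modulus-of-continuity union bound uses the uniform hypothesis directly. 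Contrary to your aside, pointwise continuity does suffice via the paper's route, and your own bound does not actually need independence.
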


\section{Experiments}
\label{sec:experiments}

\subsection{Synthetic Data}
\label{sec:synthetic}

\paragraph{Setup.}
We first study a controlled synthetic setting with a scalar prompt-difficulty variable $T\in[0,1]$. For each prompt we draw $M$ candidates, the probability of correctness decreases with difficulty, and verifier scores are sampled from a known difficulty-dependent distribution. Unless otherwise stated, we use target error $\alpha=0.10$, calibration size $N_{\mathrm{cal}}=10{,}000$, test size $N_{\mathrm{test}}=10{,}000$, and $M=50$. We report mean $\pm$ standard deviation over random seeds.

\paragraph{Baselines and metrics.}
We compare against three baselines. \textbf{TopK} keeps the smallest top-$K$ prefix whose calibration coverage reaches the target level. \textbf{ICP}~\cite{vovk2005algorithmic} calibrates a single global verifier threshold. \textbf{Learnt CP} fits a feature-conditional threshold from calibration data but omits the exact conformal correction. We report \textbf{empirical coverage rate} (ECR), \textbf{average prediction set size} (APSS, lower is better), and \textbf{group-stratified coverage} (GSC), the minimum empirical coverage over difficulty groups. On synthetic data, ECR and GSC are computed using the ground-truth correctness labels, and APSS is the average accepted set size.

\begin{figure}[t]
  \centering
  \includegraphics[width=0.80\linewidth]{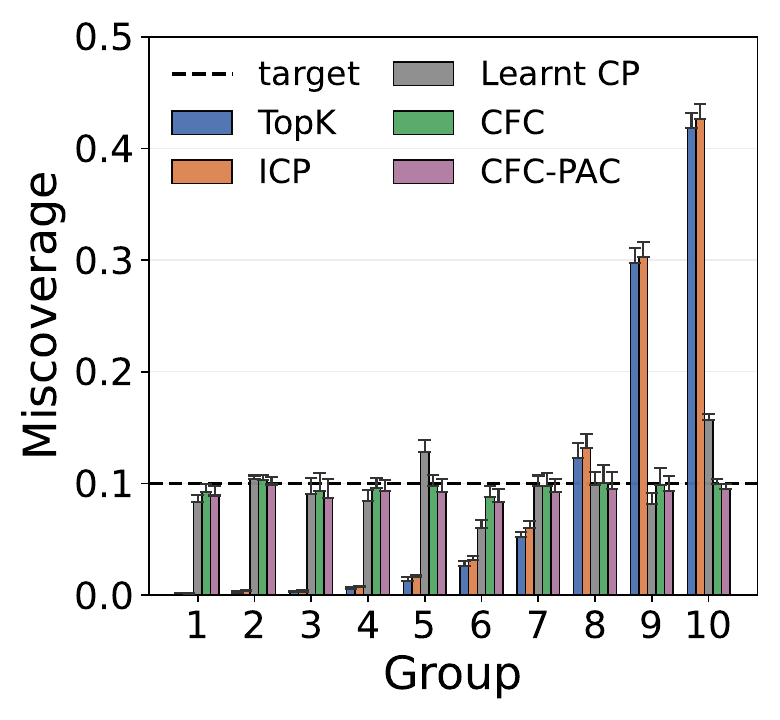}
  \caption{Groupwise miscoverage on synthetic data across 10 difficulty bins. The dashed line marks the target miscoverage $\alpha = 0.10$. {Learnt CP} improves over marginal baselines, but CFC and CFC-P remain closest to the target across all bins, especially on hard prompts.}
  \label{fig:synfg1}
\end{figure}

\paragraph{Results.}
Figure~\ref{fig:synfg1} shows the main phenomenon motivating CFC: a single global threshold under-covers hard prompts and over-covers easy ones, while conditional thresholds track the target miscoverage across the entire difficulty range. {Learnt CP} narrows this gap, but it still fails to match the subgroup reliability of CFC/CFC-P on the hardest bins. CFC achieves the smallest average prediction sets at the target level, and CFC-P raises the worst-group coverage floor further with only a modest set-size increase. This shows that the gain does not come merely from learning a better global score; it comes from feature-conditional conformalization.

\begin{figure}[t]
  \centering
  \includegraphics[width=0.80\linewidth]{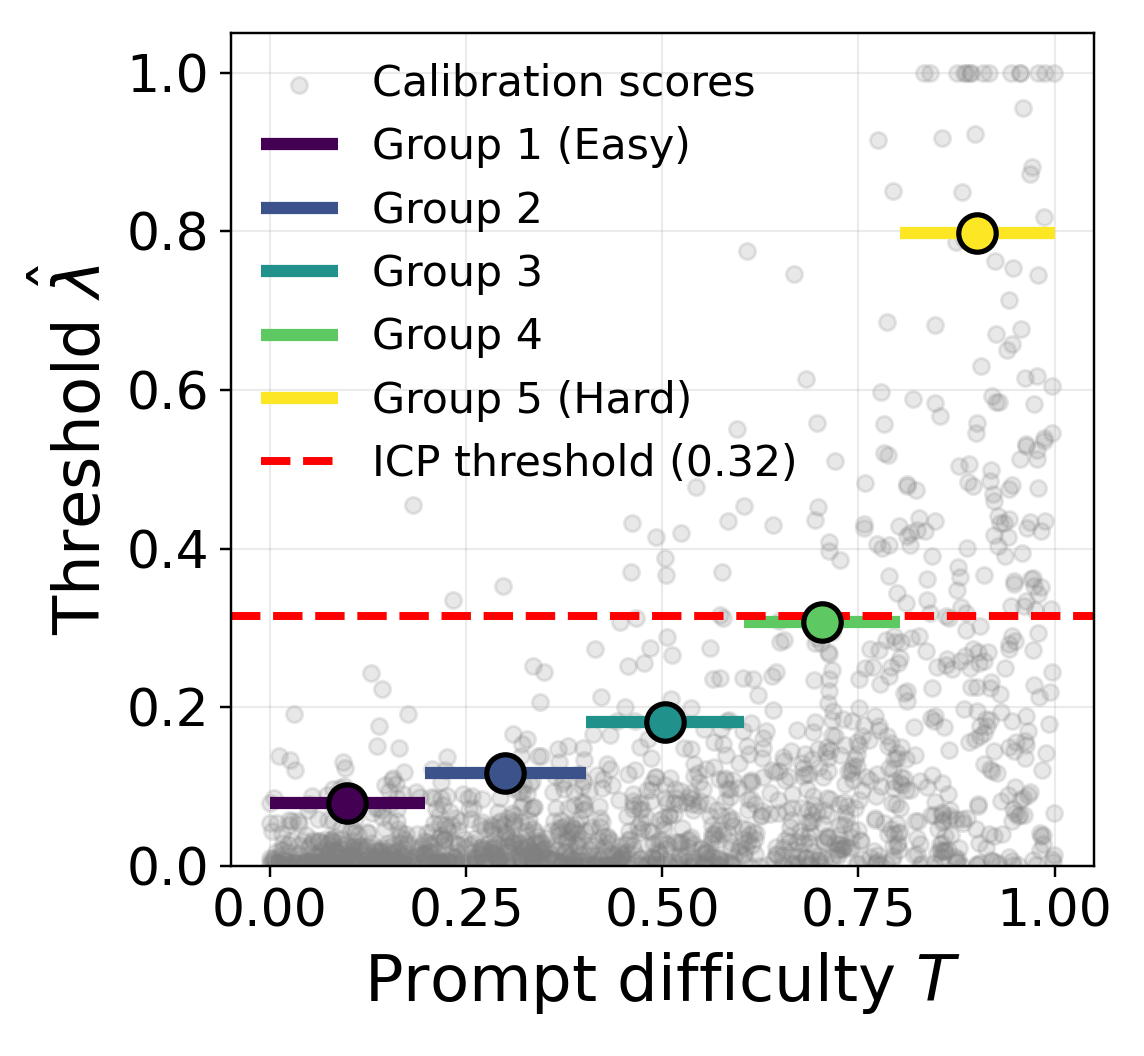}
  \caption{Learned threshold $\widehat{\lambda}_\alpha(X)$ versus prompt difficulty. Easy prompts receive stricter thresholds, while harder prompts receive looser thresholds, which is the mechanism behind the improved group-wise reliability of CFC.}
  \label{fig:lambda_vs_difficulty_main}
\end{figure}

\paragraph{Threshold adaptation.}
Figure~\ref{fig:lambda_vs_difficulty_main} visualizes the learned threshold itself. Relative to the single ICP cutoff, CFC assigns tighter thresholds to easy prompts and looser thresholds to hard prompts, which is exactly the behavior needed to correct the systematic under-coverage of global-threshold baselines on difficult inputs.

\begin{table}[t]
\centering
\small
\caption{Synthetic-data results at $\alpha=0.10$. {Learnt CP} uses a learned difficulty-conditional threshold but no exact conformal correction, isolating learning from feature-conditional conformalization.}
\label{tab:synth_cvpr_compact_main}
\setlength{\tabcolsep}{3pt}
\begin{tabular}{lccc}
\toprule
Method & ECR & APSS$\downarrow$ & GSC$\uparrow$ \\
\midrule
TopK & 90.6 $\pm$ 0.1 & 16.00 $\pm$ 0.00 & 58.2 $\pm$ 1.3 \\
ICP & 90.2 $\pm$ 0.2 & 16.71 $\pm$ 0.23 & 57.4 $\pm$ 1.4 \\
Learnt CP & 90.2 $\pm$ 0.3 & 15.72 $\pm$ 0.15 & 84.3 $\pm$ 0.5 \\
\textbf{CFC (ours)} & 90.3 $\pm$ 0.5 & \textbf{15.53 $\pm$ 0.12} & 88.7 $\pm$ 0.7 \\
\textbf{CFC-P (ours)} & 90.8 $\pm$ 0.6 & 15.87 $\pm$ 0.16 & \textbf{89.1 $\pm$ 0.5} \\
\bottomrule
\end{tabular}
\setlength{\tabcolsep}{6pt}
\end{table}

Table~\ref{tab:synth_cvpr_compact_main} shows that learning a stronger difficulty-aware threshold already helps substantially, but it still fails to match the subgroup reliability of CFC/CFC-P. This isolates the main empirical point of the method: the gains are not explained by better score fitting alone, but by the exact conditional conformal correction on top of that learned threshold.

\paragraph{Additional Sweeps and Ablations.}
Appendix~\ref{app:exp-details} reports the complete synthetic sweep across target error rates $\alpha$ together with additional sensitivity analyses over calibration size, sampling budget, and group granularity.

\begin{figure*}[t]
  \centering
  \includegraphics[width=\textwidth]{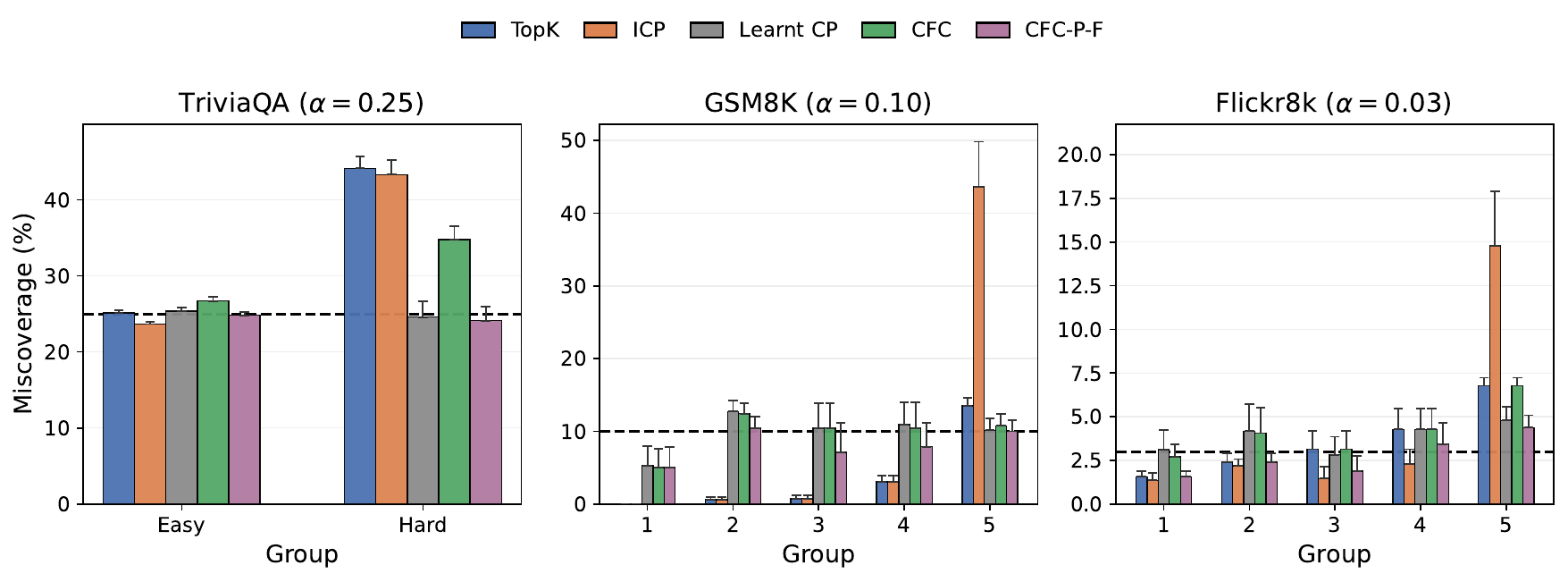}
  \caption{Groupwise miscoverage on real-world datasets at representative target errors: TriviaQA $\alpha=0.25$, GSM8K $\alpha=0.10$, and Flickr8k $\alpha=0.03$. Bars show mean miscoverage over split seeds and the upper error bars show one standard deviation. The TriviaQA panel uses the chosen two-group feature map; Appendix~\ref{app:exp-details} gives its exact construction. The GSM8K and Flickr8k panels use five equal-frequency difficulty groups ordered from easy to hard. The dashed line marks the target miscoverage $\alpha$. Across all three datasets, the conditional methods flatten the miscoverage profile relative to marginal baselines, especially on the hardest groups.}
  \label{fig:real_group_miscoverage}
\end{figure*}

\begin{table*}[!t]
\centering
\small
\caption{TriviaQA results for $\alpha\in\{0.20,0.25,0.30,0.35\}$ under the chosen calibration-defined feature map. APSS below $1$ indicates abstention on some prompts. ECR and GSC are reported in percent; for ECR, values closest to the target coverage $1-\alpha$ are preferred. Appendix~\ref{app:exp-details} reports the full experiments with additional details.}
\label{tab:trivia_compact}
\setlength{\tabcolsep}{3.5pt}
\begin{tabular}{l ccc ccc}
\toprule
Methods &
\multicolumn{3}{c}{$\alpha=0.20$} &
\multicolumn{3}{c}{$\alpha=0.25$} \\
\cmidrule(lr){2-4}\cmidrule(lr){5-7}
& ECR & GSC$\uparrow$ & APSS$\downarrow$
& ECR & GSC$\uparrow$ & APSS$\downarrow$ \\
\midrule
\textbf{TopK} &
81.6 $\pm$ 0.3 & 69.7 $\pm$ 1.5 & 1.75 $\pm$ 0.00 &
73.4 $\pm$ 0.3 & 55.9 $\pm$ 1.6 & \textbf{1.00 $\pm$ 0.00} \\
\textbf{ICP} &
80.0 $\pm$ 0.5 & 65.1 $\pm$ 1.5 & 1.43 $\pm$ 0.02 &
74.9 $\pm$ 0.3 & 56.7 $\pm$ 1.9 & 1.08 $\pm$ 0.01 \\
\textbf{Learnt CP} &
79.6 $\pm$ 0.5 & 76.3 $\pm$ 1.5 & 1.69 $\pm$ 0.04 &
74.7 $\pm$ 0.4 & 74.0 $\pm$ 1.1 & 1.22 $\pm$ 0.03 \\
\textbf{CFC (ours)} &
76.2 $\pm$ 0.5 & 65.4 $\pm$ 1.7 & \textbf{1.21 $\pm$ 0.01} &
72.7 $\pm$ 0.4 & 65.2 $\pm$ 1.8 & 1.03 $\pm$ 0.03 \\
\textbf{CFC-P-F (ours)} &
80.1 $\pm$ 0.5 & \textbf{76.3 $\pm$ 1.6} & 1.72 $\pm$ 0.04 &
75.3 $\pm$ 0.4 & \textbf{74.6 $\pm$ 1.0} & 1.32 $\pm$ 0.10 \\
\midrule
Methods &
\multicolumn{3}{c}{$\alpha=0.30$} &
\multicolumn{3}{c}{$\alpha=0.35$} \\
\cmidrule(lr){2-4}\cmidrule(lr){5-7}
& ECR & GSC$\uparrow$ & APSS$\downarrow$
& ECR & GSC$\uparrow$ & APSS$\downarrow$ \\
\midrule
\textbf{TopK} &
73.4 $\pm$ 0.3 & 55.9 $\pm$ 1.6 & 1.00 $\pm$ 0.00 &
73.4 $\pm$ 0.3 & 55.9 $\pm$ 1.6 & 1.00 $\pm$ 0.00 \\
\textbf{ICP} &
69.5 $\pm$ 0.8 & 49.3 $\pm$ 2.7 & 0.90 $\pm$ 0.02 &
64.5 $\pm$ 0.6 & 43.0 $\pm$ 2.1 & 0.78 $\pm$ 0.01 \\
\textbf{Learnt CP} &
69.5 $\pm$ 0.7 & 68.7 $\pm$ 1.3 & 0.97 $\pm$ 0.02 &
64.6 $\pm$ 0.7 & 63.0 $\pm$ 2.1 & 0.82 $\pm$ 0.02 \\
\textbf{CFC (ours)} &
68.4 $\pm$ 0.7 & 62.8 $\pm$ 2.0 & \textbf{0.88 $\pm$ 0.01} &
63.9 $\pm$ 0.7 & 59.2 $\pm$ 2.3 & \textbf{0.78 $\pm$ 0.01} \\
\textbf{CFC-P-F (ours)} &
70.0 $\pm$ 0.8 & \textbf{69.2 $\pm$ 1.3} & 0.99 $\pm$ 0.03 &
65.1 $\pm$ 0.8 & \textbf{63.7 $\pm$ 2.2} & 0.83 $\pm$ 0.02 \\
\bottomrule
\end{tabular}
\setlength{\tabcolsep}{6pt}
\end{table*}

\begin{table}[t]
\centering
\small
\caption{TriviaQA feature-map ablation at $\alpha=0.30$ (target coverage $70\%$). Each cell reports ECR/GSC/APSS. \textbf{TopK} and \textbf{ICP} are unchanged across feature maps and remain fixed at $73.4/55.9/1.00$ and $69.5/49.3/0.90$, respectively.}
\label{tab:trivia_split_ablation}
\setlength{\tabcolsep}{3pt}
\begin{tabular}{lccc}
\toprule
Setting & CFC & CFC-P-F \\
\midrule
Entropy-linear $\Phi$ & 66.9 / 45.1 / 1.02 & 70.6 / 53.2 / 1.32 \\
Max-loss-linear $\Phi$ & 67.6 / 56.5 / 0.99 & 70.7 / 57.4 / 1.26 \\
Chosen $\Phi$ & 68.4 / 62.8 / 0.88 & 70.0 / 69.2 / 0.99 \\
\bottomrule
\end{tabular}
\setlength{\tabcolsep}{6pt}
\end{table}

\begin{table}[t]
\centering
\small
\caption{GSM8K results at $\alpha=0.05$ using the first five sampled candidates per prompt and the quadratic basis $\Phi(X)=[1,T(X),T(X)^2]$ with $T(X)$ equal to mean verifier loss. ECR and GSC are reported in percent; for ECR, values closest to the target coverage $1-\alpha$ are preferred. Appendix~\ref{app:exp-details} reports the full sweep.}
\label{tab:ac2p_restyled}
\setlength{\tabcolsep}{3.0pt}
\begin{tabular}{l ccc}
\toprule
Methods & ECR & GSC$\uparrow$ & APSS$\downarrow$ \\
\midrule
\textbf{TopK} & 96.42 $\pm$ 0.39 & 86.52 $\pm$ 1.11 & \textbf{1.00 $\pm$ 0.00} \\
\textbf{ICP} & 95.09 $\pm$ 1.42 & 79.85 $\pm$ 6.53 & 4.73 $\pm$ 0.09 \\
\textbf{Learnt CP} & 94.91 $\pm$ 1.03 & 88.48 $\pm$ 3.05 & 4.01 $\pm$ 0.98 \\
\textbf{CFC (ours)} & 94.82 $\pm$ 0.97 & 88.48 $\pm$ 2.32 & 2.35 $\pm$ 0.43 \\
\textbf{CFC-P-F (ours)} & 95.24 $\pm$ 1.40 & \textbf{88.79 $\pm$ 3.01} & 4.59 $\pm$ 0.62 \\
\bottomrule
\end{tabular}
\setlength{\tabcolsep}{6pt}
\end{table}

\begin{table}[t]
\centering
\small
\caption{GSM8K sample-budget ablation at $\alpha=0.10$ under the same mean-loss quadratic rule. The larger budget gives only a small subgroup-coverage gain, but it substantially inflates APSS.}
\label{tab:gsm8k_n_ablation_main}
\setlength{\tabcolsep}{2pt}
\begin{tabular}{l ccc ccc}
\toprule
Method &
\multicolumn{3}{c}{$N=5$} &
\multicolumn{3}{c}{$N=20$} \\
\cmidrule(lr){2-4}\cmidrule(lr){5-7}
& ECR & GSC$\uparrow$ & APSS$\downarrow$
& ECR & GSC$\uparrow$ & APSS$\downarrow$ \\
\midrule
\textbf{CFC (ours)} & 90.18 & 86.36 & 1.49 & 90.30 & 87.73 & 3.79 \\
\textbf{CFC-P-F (ours)} & 91.91 & 88.48 & 2.34 & 92.06 & 88.79 & 7.97 \\
\bottomrule
\end{tabular}
\setlength{\tabcolsep}{6pt}
\end{table}

\begin{table}[t]
\centering
\small
\caption{VLM experiment on Flickr8k with \textsc{Qwen2-VL-7B-Instruct} at $\alpha=0.03$ using up to two cached candidates per image and the quadratic basis $\Phi(X)=[1,T(X),T(X)^2]$ with $T(X)$ equal to mean verifier loss. ECR and GSC are reported in percent; for ECR, values closest to the target coverage $1-\alpha=97\%$ are preferred.}
\label{tab:flickr8k_main}
\setlength{\tabcolsep}{3pt}
\begin{tabular}{lccc}
\toprule
Method & ECR & APSS$\downarrow$ & GSC$\uparrow$ \\
\midrule
\textbf{TopK} & 96.37 $\pm$ 0.17 & 1.00 $\pm$ 0.00 & 93.23 $\pm$ 0.47 \\
\textbf{ICP} & 95.58 $\pm$ 0.54 & 1.84 $\pm$ 0.01 & 85.21 $\pm$ 3.14 \\
\textbf{Learnt CP} & 96.16 $\pm$ 0.17 & 1.22 $\pm$ 0.07 & 94.48 $\pm$ 0.26 \\
\textbf{CFC (ours)} & 95.81 $\pm$ 0.38 & \textbf{0.99 $\pm$ 0.00} & 93.23 $\pm$ 0.47 \\
\textbf{CFC-P-F (ours)} & 97.27 $\pm$ 0.21 & 1.42 $\pm$ 0.07 & \textbf{95.21 $\pm$ 0.77} \\
\bottomrule
\end{tabular}
\setlength{\tabcolsep}{6pt}
\end{table}

\subsection{Real-World Data}
\label{sec:real}

\paragraph{Datasets.}
We evaluate on three settings. \textbf{GSM8K}~\cite{cobbe2021training} contains grade-school math word problems. \textbf{TriviaQA}~\cite{joshi2017triviaqa} is an open-domain question-answering benchmark. To test transfer beyond text-only generators, we also include a \textbf{Flickr8k} vision-language experiment with \textsc{Qwen2-VL-7B-Instruct} as the base model. We use \textsc{Llama-3-8B-Instruct} for GSM8K and TriviaQA.

\paragraph{Baselines and metrics.}
On real data we compare three baselines against CFC variants. \textbf{TopK} keeps the smallest top-$K$ prefix whose calibration coverage reaches the target level. \textbf{ICP} calibrates a single global verifier threshold. \textbf{Learnt CP} learns a feature-conditional threshold from calibration data but omits the exact conformal correction. We again report ECR, APSS, and GSC. For ECR, the preferred value is the one closest to the target coverage $1-\alpha$. APSS can fall below $1$ because the method may abstain and return the empty set on some prompts when no sampled candidate passes the calibrated threshold.

\paragraph{Implementation details.}
Unless otherwise stated, we use \textsc{Llama-3-8B-Instruct} as the base LLM with nucleus sampling (top-$p=0.8$, temperature $=0.7$) and draw up to $M=20$ samples per prompt. On reasoning tasks we use a separate verifier to compute $V(X,y)$, e.g. \textsc{Qwen2.5-Math-RM-72B} (and \textsc{Qwen2-VL-7B-Instruct} for VLM tasks). On GSM8K, the strongest setting uses the first $5$ sampled candidates per prompt, defines $T(X)$ as the mean verifier loss across those samples, and uses the quadratic basis $\Phi(X)=[1,T(X),T(X)^2]$. On Flickr8k, the strongest setting keeps up to $2$ cached candidates per image, defines $T(X)$ as the mean verifier loss across those candidates, and uses the quadratic basis $\Phi(X)=[1,T(X),T(X)^2]$. On TriviaQA, the strongest setting uses a calibration-defined feature map built from answer-distribution entropy and verifier loss; Appendix~\ref{app:exp-details} gives the exact construction. In the main paper, GSM8K, TriviaQA, and Flickr8k report \textbf{CFC}, and \textbf{CFC-P-F}. CFC truncates the accepted set after the best accepted candidate in sample order, while CFC-P-F applies the stability-based PAC adjustment to the full thresholded set. Appendix~\ref{app:exp-details} includes the full TriviaQA, GSM8K, and Flickr8k sweeps with all four CFC variants. For GSM8K and Flickr8k, GSC is computed over five equal-frequency bins of the scalar difficulty proxy; for TriviaQA, GSC uses the two-group feature map, whose exact definition is given in Appendix~\ref{app:exp-details}. All variants are purely post-hoc: they do not finetune the base generator or verifier.

\paragraph{Groupwise behavior.}
Figure~\ref{fig:real_group_miscoverage} complements Tables~\ref{tab:trivia_compact}--\ref{tab:flickr8k_main} by visualizing the same subgroup effect behind the real-data tables: a single global cutoff tends to under-cover the hardest inputs, while conditional thresholds flatten the miscoverage profile. This pattern is strongest on GSM8K, where the gain is concentrated on the hardest bins, and it remains visible on TriviaQA and Flickr8k as well.

\paragraph{Results for TriviaQA.}
Table~\ref{tab:trivia_compact} shows that the chosen TriviaQA feature map gives the strongest overall tradeoff we found, and the left panel of Figure~\ref{fig:real_group_miscoverage} shows why: the conditional rule mainly corrects the hard subset rather than trying to smooth every prompt equally. \textbf{Learnt CP} already improves subgroup reliability substantially over ICP, but it does so with larger prediction sets than \textbf{CFC}. At $\alpha=0.30$ (target coverage $70\%$), \textbf{CFC-P-F} is closest to target at $70.0\%$, while \textbf{CFC} attains the smallest prediction sets at $0.88$ and still raises GSC from $49.3\%$ (ICP) to $62.8\%$. The same division of labor is visible at $\alpha=0.20$, $\alpha=0.25$, and $\alpha=0.35$: the PAC variant is the most target-calibrated among our methods, while \textbf{CFC} is the most size-efficient.

\paragraph{Feature-map ablation.}
Table~\ref{tab:trivia_split_ablation} compares simple TriviaQA feature maps at $\alpha=0.30$. The chosen $\Phi$ yields the best subgroup reliability and the smallest APSS for \textbf{CFC}, and it also gives the strongest GSC for \textbf{CFC-P-F} while keeping target fit competitive. This is the main point of the ablation: the gains are not tied to a single scalar proxy, but the chosen feature map gives the best overall balance among target fit, subgroup reliability, and set size.

\paragraph{Results for GSM8K.}
Table~\ref{tab:ac2p_restyled}, Table~\ref{tab:gsm8k_n_ablation_main}, and the middle panel of Figure~\ref{fig:real_group_miscoverage} show that on GSM8K a small candidate budget together with a smooth loss-based feature map gives the best tradeoff we found. At $\alpha=0.05$ (target coverage $95\%$), all methods are reasonably close to target, but the conditional methods materially improve subgroup behavior over \textbf{ICP}: \textbf{CFC} reduces APSS from $4.73$ to $2.35$ while lifting GSC from $79.85\%$ to $88.48\%$, and \textbf{CFC-P-F} raises the subgroup floor further to $88.79\%$. Table~\ref{tab:gsm8k_n_ablation_main} explains why the main paper keeps the smaller $N=5$ budget: moving to $N=20$ barely changes ECR or GSC, but it more than doubles APSS for \textbf{CFC} and more than triples it for \textbf{CFC-P-F}. Appendix~\ref{app:exp-details} reports the complete GSM8K sweep.

\paragraph{Results for Flickr8k.}
Table~\ref{tab:flickr8k_main} and the right panel of Figure~\ref{fig:real_group_miscoverage} show that the same post-hoc conformal layer transfers to a vision-language model and a different verifier. At the target coverage level $1-\alpha=97\%$, \textbf{CFC-P-F} is closest to target at $97.27\%$ while achieving the strongest subgroup reliability at $95.21\%$ GSC. \textbf{CFC} is the most size-efficient variant at $0.99$ APSS, but on this easy benchmark it often collapses to a single caption and therefore gives up more target fit than the PAC rule. This still supports the transfer claim: the same conditional calibration layer remains effective when the base model is \textsc{Qwen2-VL-7B-Instruct}, and Appendix~\ref{app:exp-details} shows the full small-$\alpha$ sweep together with a compact setting ablation.

\paragraph{Additional Sweeps and Ablations.}
Appendix~\ref{app:exp-details} reports the full TriviaQA, GSM8K, and Flickr8k target-risk sweeps, other CFC variants, and compact ablations for the GSM8K budget choice and the Flickr8k setting choice.

\section{Conclusion}
Conditional Factuality Control replaces a single global factuality threshold with a feature-conditional one, and our results show that this post-hoc conformal layer improves subgroup reliability and often yields a better coverage--set-size tradeoff than marginal baselines across synthetic and real LLM/VLM settings.

\newpage
\section*{Acknowledgements}
This work was supported in part by the National Institutes of Health (NIH) under Grants R01HL173186 and R01HL177813, and by the National Science Foundation (NSF) under Grant No.~2306545.


{
    \small
    \bibliographystyle{ieeenat_fullname}
    \bibliography{main}
}

\clearpage
\onecolumn
\appendix
\section{Extended Background}\label{app:background}

\subsection{Conformal Factuality}

Let $X \in \mathcal X$ be a prompt and let
$\pi:\mathcal X \to \Delta(\mathcal Y)$ denote a fixed generator over completions.
At inference time, we repeatedly draw candidates $Y \sim \pi(\cdot \mid X)$ and seek a prediction set that contains \emph{at least one correct answer} with high probability:
\begin{equation}
\mathbb{P}\!\left(\exists y \in \widehat C_\alpha(X_{N+1}) : A(X_{N+1},y)=1\right) \ge 1-\alpha,
\end{equation}
where $\alpha$ is the target error level and $A(X,y)\in\{0,1\}$ indicates whether candidate $y$ is correct for prompt $X$.
To achieve this guarantee, each sampled candidate is evaluated with a verifier score
$V:\mathcal X\times\mathcal Y\to[0,1]$.
In this paper, smaller verifier scores are better, so a calibrated acceptance rule amounts to choosing a threshold and retaining all candidates with score below it.

\subsection{Inductive Conformal Prediction}

Split conformal prediction transforms the outputs of a black-box model into valid prediction sets using a held-out calibration set~\cite{vovk2005algorithmic,shafer2008tutorial}.
Given calibration pairs $\{(X_i,Y_i)\}_{i=1}^{N}$ and a nonconformity score function $s:\mathcal X\times\mathcal Y \to \mathbb R$, one computes calibration scores
$V_i=s(X_i,Y_i)$, sorts them, and forms the empirical quantile
\begin{equation}
\widehat Q_{1-\alpha} := V_{(\lceil (N+1)(1-\alpha)\rceil)}.
\end{equation}
The corresponding split-conformal prediction set is
\begin{equation}
\widehat C_{1-\alpha}(X_{N+1})
:=
\{y \in \mathcal Y : s(X_{N+1},y) \le \widehat Q_{1-\alpha}\},
\end{equation}
which satisfies marginal coverage
\begin{equation}
\mathbb P\!\left(Y_{N+1}\in \widehat C_{1-\alpha}(X_{N+1})\right)\ge 1-\alpha
\end{equation}
under exchangeability.
In the conformal-factuality setting, this corresponds to using one global acceptance threshold for all prompts.

\subsection{Conditional Conformal Prediction}

Marginal coverage holds only on average over the prompt distribution.
For LLMs, this can hide severe heterogeneity: easy prompts may be over-covered while hard prompts are under-covered.
The ideal pointwise conditional guarantee
\begin{equation}
\mathbb P\!\left(Y_{N+1}\in \widehat C(X_{N+1}) \mid X_{N+1}=x\right) \ge 1-\alpha
\end{equation}
is impossible to achieve exactly in finite samples without strong assumptions~\cite{vovk2012conditional,foygel2021limits}.

Following Gibbs et al.~\cite{gibbs2024conformal}, one can rewrite exact conditional coverage as an infinite family of weighted marginal constraints:
\begin{equation}
\mathbb P\!\left(Y_{N+1}\in \widehat C(X_{N+1}) \mid X_{N+1}\right)=1-\alpha
\iff
\mathbb E\!\left[f(X_{N+1})\big(\mathds{1}\{Y_{N+1}\in \widehat C(X_{N+1})\}-(1-\alpha)\big)\right]=0
\end{equation}
for all measurable $f$.
Their relaxation replaces the class of all measurable functions with a chosen function class $\mathcal F$:
\begin{equation}
\mathbb E\!\left[
f(X_{N+1})\big(\mathds{1}\{Y_{N+1}\in \widehat C(X_{N+1})\}-(1-\alpha)\big)
\right]=0,
\qquad \text{for all } f\in\mathcal F.
\end{equation}
Taking $\mathcal F=\{1\}$ recovers marginal conformal prediction, while
$\mathcal F=\{\Phi(\cdot)^\top\beta:\beta\in\mathbb R^d\}$ yields a finite-dimensional feature-conditional target.

For this linear class, Gibbs et al. define an augmented quantile-regression estimator using the pinball loss
\begin{equation}
\rho_{1-\alpha}(u) = u\big((1-\alpha)-\mathds{1}\{u<0\}\big).
\end{equation}
Given calibration scores $\{(X_i,S_i)\}_{i=1}^N$ and a fresh candidate score $S$, the augmented fit is
\begin{equation}
\widehat g_S
:=
\argmin_{g\in\mathcal F}
\frac{1}{N+1}\sum_{i=1}^{N}\rho_{1-\alpha}\!\big(S_i-g(X_i)\big)
\;+\;
\frac{1}{N+1}\rho_{1-\alpha}\!\big(S-g(X_{N+1})\big).
\end{equation}
The resulting prediction rule keeps labels whose score does not exceed the fitted value at the same score:
\begin{equation}
\widehat C(X_{N+1})
:=
\left\{
y : S(X_{N+1},y) \le \widehat g_{S(X_{N+1},y)}(X_{N+1})
\right\}.
\end{equation}

\begin{theorem}[Gibbs et al.~\cite{gibbs2024conformal}, Theorem 2]
Let $\mathcal F=\{\Phi(\cdot)^\top\beta:\beta\in\mathbb R^d\}$ be a linear class over the basis $\Phi:\mathcal X\to\mathbb R^d$.
Then for any non-negative $f\in\mathcal F$ with $\mathbb E[f(X)]>0$, the prediction rule above satisfies
\begin{equation}
\mathbb P_f\!\left(Y_{N+1}\in \widehat C(X_{N+1})\right)\ge 1-\alpha.
\end{equation}
\end{theorem}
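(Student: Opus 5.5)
The plan is to follow the dual/first-order argument of Gibbs et al. The idea is to plug the \emph{true} test score into the augmented fit and read off the coverage inequality from the subgradient optimality conditions of the pinball loss. Let $S_{N+1}$ denote the score of the test pair. Let $\widehat g:=\widehat g_{S_{N+1}}$ be the augmented quantile-regression fit computed from all $N+1$ points $\{(X_i,S_i)\}_{i=1}^{N+1}$. By the definition of $\widehat C(X_{N+1})$, the test label is covered exactly when $S_{N+1}\le \widehat g(X_{N+1})$. So it suffices to bound
\[
\mathbb E\big[f(X_{N+1})\,\mathds 1\{S_{N+1}>\widehat g(X_{N+1})\}\big]
\]
by $\alpha\,\mathbb E[f(X)]$. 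Dividing by $\mathbb E[f(X)]>0$ then gives $\mathbb P_f(\text{miscoverage})\le\alpha$, which is the claim.

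\textbf{Step 1 (KKT).} Write $\widehat g=\Phi(\cdot)^\top\widehat\beta$. The objective is convex in $\beta$, so $0$ lies in its subdifferential at $\widehat\beta$. This gives scalars $\xi_i$ with $\sum_{i=1}^{N+1}\Phi(X_i)\,\xi_i=0$, where
\[
\xi_i=1-\alpha \text{ if } S_i>\widehat g(X_i),\qquad \xi_i=-\alpha \text{ if } S_i<\widehat g(X_i),\qquad \xi_i\in[-\alpha,1-\alpha] \text{ if } S_i=\widehat g(X_i).
\]
Fix any non-negative $f=\Phi^\top\beta\in\mathcal F$. Taking the inner product with $\beta$ gives $\sum_i f(X_i)\xi_i=0$. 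In every case $\xi_i\ge \mathds 1\{S_i>\widehat g(X_i)\}-\alpha$, and $f\ge0$, so
\[
\sum_{i=1}^{N+1} f(X_i)\,\mathds 1\{S_i>\widehat g(X_i)\}\le \alpha\sum_{i=1}^{N+1} f(X_i).
\]

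\textbf{Step 2 (exchangeability).} The fit $\widehat g$ is a symmetric function of the multiset $\{(X_i,S_i)\}_{i=1}^{N+1}$. Exchangeability therefore makes the $N+1$ terms $f(X_i)\mathds 1\{S_i>\widehat g(X_i)\}$ identically distributed. Hence
\[
\mathbb E\big[f(X_{N+1})\mathds 1\{S_{N+1}>\widehat g(X_{N+1})\}\big]=\tfrac{1}{N+1}\,\mathbb E\Big[\sum_i f(X_i)\mathds 1\{S_i>\widehat g(X_i)\}\Big]\le \tfrac{\alpha}{N+1}\,\mathbb E\Big[\sum_i f(X_i)\Big]=\alpha\,\mathbb E[f(X)].
\]
Rewriting $\mathbb E[f(X_{N+1})\mathds 1\{\text{cover}\}]/\mathbb E[f(X)]\ge 1-\alpha$ is exactly the $\mathbb P_f$ statement.

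\textbf{Main obstacle.} The delicate point is Step 2's symmetry claim. The pinball argmin need not be unique, so a tie-breaking rule for the minimizer must be fixed that does not depend on the ordering of the data. One option is the minimum-norm minimizer; another is a deterministic rule applied to the sorted data. With such a rule, the identity of the test index plays no role and the exchangeability step goes through. The KKT step itself is routine convex analysis. A secondary check is that $\widehat C(X_{N+1})$ is defined pointwise through $\widehat g_{S(X_{N+1},y)}$, so evaluating it at the true label really does reduce coverage to the event $S_{N+1}\le \widehat g_{S_{N+1}}(X_{N+1})$.
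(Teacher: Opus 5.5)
Your proposal is correct and follows the same argument the paper uses for this cited result, which it reproduces inside its proof of Theorem~\ref{thm:cfc_cov}: first-order optimality of the augmented pinball fit tested against $f=\Phi(\cdot)^\top\gamma$, then exchangeability of the $N+1$ summands. The differences are cosmetic. The paper uses the one-sided directional derivative $\psi_N'(0^+)\ge 0$ along $\gamma$ instead of the full subgradient KKT condition, and it leaves implicit the permutation-invariant choice of minimizer that you correctly identify as necessary for the exchangeability step.
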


Our method is a conformal-factuality instantiation of this framework, with the latent success score $S(X)$ replacing the standard label-wise nonconformity score and a fixed-point construction used to obtain the deployed threshold.

\section{Proof of Results from the Main Paper}\label{app:proofs}

\subsection{Proof of Theorem~\ref{thm:cfc_cov}}

\begin{proof}
Recall the success score
\[
S(X)
\;:=\;
\inf\{\lambda\in[0,1]:\ \ell_\lambda(X)=0\}
\;=\;
\inf\{V(X,y): y\in C(X),\ A(X,y)=1\},
\]
so that for any threshold $\lambda(X)\in[0,1]$,
\begin{equation}
\big\{\exists\,y\in\widehat C_\alpha(X): A(X,y)=1\big\}
\quad\Longleftrightarrow\quad
\big\{S(X)\le \lambda(X)\big\}.
\label{eq:succ-vs-S}
\end{equation}
Thus, if we show that
\begin{equation}
\label{eq:thm41-target}
\mathbb{P}_f\big(S(X_{N+1})\le \widehat\lambda_\alpha(X_{N+1})\big)\ \ge\ 1-\alpha
\end{equation}
for every non-negative $f\in\mathcal F$ with $\mathbb E[f(X)]>0$, the result follows immediately from~\eqref{eq:succ-vs-S}.

Throughout, for any such $f$ we define the $f$-reweighted probability of an event $E$ by
\[
\mathbb{P}_f(E)
:= \frac{\mathbb{E}\big[f(X)\,\mathds{1}\{(X,S)\in E\}\big]}{\mathbb{E}[f(X)]}.
\]

Let $\tau:=1-\alpha$ and recall the pinball loss
\[
\rho_\tau(u) \;=\; u\big(\tau-\mathds 1\{u<0\}\big).
\]
Let $\beta_S$ be the augmented quantile-regression minimizer from Eq.~\eqref{app:eq:aug-qr-obj}
for the realized calibration set and the fresh pair $(X,S)$, and define
$g(S):=\Phi(X)^\top \beta_S$.
Then for any nonnegative $f\in\mathcal F$ with $\mathbb E[f(X)]>0$,
\[
\mathbb{E}\big[f(X)\,\mathds 1\{S\le g(S)\}\big]\ \ge\ (1-\alpha)\,\mathbb{E}[f(X)].
\]

Fix $\gamma\in\mathbb R^d$ with $f(X)=\Phi(X)^\top\gamma\ge 0$ for all $X$ and consider
\[
\psi_N(\varepsilon)
=\frac{1}{N+1}\sum_{i=1}^N \rho_{1-\alpha}\big(S_i-\Phi(X_i)^\top(\beta_S+\varepsilon\gamma)\big)
+\frac{1}{N+1}\rho_{1-\alpha}\big(S-\Phi(X)^\top(\beta_S+\varepsilon\gamma)\big).
\]
By convexity and optimality of $\beta_S$, $\psi_N'(0^+)\ge 0$. This gives
\[
\psi_N'(0^+)
= -\frac{1}{N+1}\!\left[
\sum_{i=1}^{N}\!\big((1-\alpha)-\mathds 1\{S_i\le \Phi(X_i)^\top\beta_S\}\big) f(X_i)
+\big((1-\alpha)-\mathds 1\{S\le \Phi(X)^\top\beta_S\}\big) f(X)
\right]\!\ge 0.
\]
Rearranging,
\[
\frac{1}{N+1}\sum_{i=1}^{N}\!\mathds 1\{S_i\le \Phi(X_i)^\top\beta_S\} f(X_i)
+\frac{1}{N+1}\mathds 1\{S\le \Phi(X)^\top\beta_S\} f(X)
\ \ge\
(1-\alpha)\,\frac{1}{N+1}\sum_{i=1}^{N} f(X_i) \; + \; (1-\alpha)\,\frac{1}{N+1} f(X).
\]
Now take expectation over all $N{+}1$ exchangeable draws. By exchangeability, each of the $N{+}1$ summands on each side has the same distribution, so
\[
\mathbb E\big[f(X)\,\mathds 1\{S\le \Phi(X)^\top\beta_S\}\big]\ \ge\ (1-\alpha)\,\mathbb E[f(X)].
\]
Since $g(S)=\Phi(X)^\top\beta_S$, the claim follows.

Recall the CFC threshold
\[
\widehat\lambda_\alpha(X)\ :=\ \sup\{\,t\in[0,1]:\ t\le g(t)\,\},
\qquad g(t)=\Phi(X)^\top \beta_t.
\]
By definition of the supremum, for any realized $S\in[0,1]$ we have
\begin{equation}
\label{eq:incl}
\mathds 1\{S\le g(S)\}\ \le\ \mathds 1\{S\le \widehat\lambda_\alpha(X)\}.
\end{equation}

Multiplying \eqref{eq:incl} by $f(X)\ge 0$ and taking expectations,
\[
\mathbb E\big[f(X)\,\mathds 1\{S\le \widehat\lambda_\alpha(X)\}\big]
\ \ge\
\mathbb E\big[f(X)\,\mathds 1\{S\le g(S)\}\big]
\ \ge\ (1-\alpha)\,\mathbb E[f(X)].
\]
Dividing by $\mathbb E[f(X)]>0$ gives
\[
\mathbb P_f\big(S\le \widehat\lambda_\alpha(X)\big)\ \ge\ 1-\alpha.
\]

By definition of the success score and the prediction set,
\[
\{\,S(X)\le \widehat\lambda_\alpha(X)\,\}
\quad\Longleftrightarrow\quad
\{\,\exists\,y\in \widehat C_\alpha(X): A(X,y)=1\,\}.
\]
Therefore,
\[
\mathbb P_f\!\left(\exists\,y\in \widehat C_\alpha(X): A(X,y)=1\right)
\;=\;
\mathbb P_f\!\left(S(X)\le \widehat\lambda_\alpha(X)\right)
\;\ge\; 1-\alpha,
\]
as claimed.
\end{proof}

\subsection{Proof of Theorem~\ref{thm:pac-cfc}}

\begin{proof}
Write the calibration set as
\[
\mathcal D_{\mathrm{cal}}
= \{(X_i,S_i)\}_{i=1}^N,
\qquad
(X_{N+1},S_{N+1})\sim\text{i.i.d. as }(X_i,S_i),
\]
and recall the success indicator
\[
Z(x,s)\;:=\;\mathds{1}\bigl\{s\le \widehat{\lambda}_\alpha(x)\bigr\},
\]
so that $Z(X,S)=1$ iff there exists a correct candidate in $\widehat C_\alpha(X)$.

Define
\[
Q(\mathcal D_{\mathrm{cal}})
\;:=\;
\mathbb P\!\bigl(Z(X,S)=1 \mid \mathcal D_{\mathrm{cal}}\bigr)
\;=\;
\mathbb E\!\bigl[Z(X,S)\mid \mathcal D_{\mathrm{cal}}\bigr].
\]

By Theorem~\ref{thm:cfc_cov} applied with $f\equiv 1$ and the law of total expectation,
\[
\mathbb P\bigl(Z(X,S)=1\bigr)
= \mathbb E_{\mathcal D_{\mathrm{cal}}}\,Q(\mathcal D_{\mathrm{cal}})
\;\ge\; 1-\alpha.
\]
Thus
\begin{equation}
\label{eq:Q-mean-lower}
\mathbb E_{\mathcal D_{\mathrm{cal}}}\bigl[Q(\mathcal D_{\mathrm{cal}})\bigr]
\;\ge\; 1-\alpha.
\end{equation}

We now show that $Q(\mathcal D_{\mathrm{cal}})$ is Lipschitz in the calibration set, with sensitivity of order $1/N$ to replacing one calibration pair.

Let $\mathcal D=\{(X_i,S_i)\}_{i=1}^N$ and $\mathcal D'=\{(X_i',S_i')\}_{i=1}^N$ differ only in the $k$-th pair. For a fixed test prompt $x$ and a candidate success score $s\in[0,1]$, consider the ridge-regularized augmented quantile regression objective
\begin{equation}
\label{app:eq:aug-qr-obj}
\beta \;\mapsto\;
\frac{1}{N+1}\sum_{i=1}^N \rho_{1-\alpha}\!\left(S_i - \Phi(X_i)^\top \beta\right)
+\frac{1}{N+1}\rho_{1-\alpha}\!\left(s - \Phi(x)^\top \beta\right)
+\frac{\lambda}{2}\|\beta\|_2^2.
\end{equation}
Let $\beta_s(\mathcal D,x)$ and $\beta_s(\mathcal D',x)$ denote the unique minimizers of~\eqref{app:eq:aug-qr-obj} with $\mathcal D$ and $\mathcal D'$ respectively, and define
\[
g_{\mathcal D}(x,s)\ :=\ \Phi(x)^\top\beta_s(\mathcal D,x),
\qquad
g_{\mathcal D'}(x,s)\ :=\ \Phi(x)^\top\beta_s(\mathcal D',x).
\]

By Assumption~(1), $\|\Phi(X_i)\|_2\le R$ almost surely, and the pinball loss is 1-Lipschitz. Hence each sample loss is $R$-Lipschitz in $\beta$. By Assumption~(2), adding the ridge term makes the objective~\eqref{app:eq:aug-qr-obj} strongly convex. Standard stability results for regularized ERM imply that replacing one data point perturbs the minimizer by at most
\[
\bigl\|\beta_s(\mathcal D,x)-\beta_s(\mathcal D',x)\bigr\|_2
\;\le\; \frac{C_1 R}{\lambda N}
\]
for some universal constant $C_1>0$, uniformly over $s\in[0,1]$ and $x$.

Using Assumption~(1) again,
\[
\bigl|g_{\mathcal D}(x,s)-g_{\mathcal D'}(x,s)\bigr|
\le
\|\Phi(x)\|_2\,\bigl\|\beta_s(\mathcal D,x)-\beta_s(\mathcal D',x)\bigr\|_2
\le
\frac{C_1 R^2}{\lambda N}.
\]
Thus there is a constant $C_2>0$ such that
\begin{equation}
\label{eq:g-stability}
\sup_{x\in\mathcal X}\sup_{s\in[0,1]}
\bigl|g_{\mathcal D}(x,s)-g_{\mathcal D'}(x,s)\bigr|
\;\le\; \frac{C_2}{N}.
\end{equation}

The CFC threshold at $x$ is
\[
\widehat{\lambda}_\alpha^{(\mathcal D)}(x)
\;:=\;
\sup\{\,s\in[0,1]:\ s\le g_{\mathcal D}(x,s)\,\},
\]
and similarly for $\widehat{\lambda}_\alpha^{(\mathcal D')}(x)$. Under the fixed-point construction in Eq.~\eqref{eq:pre_set}, this threshold is a Lipschitz functional of $s\mapsto g_{\mathcal D}(x,s)$ in the uniform norm: there exists a constant $C_3>0$ such that
\begin{equation}
\label{eq:lambda-stability}
\sup_{x\in\mathcal X}
\bigl|\widehat{\lambda}_\alpha^{(\mathcal D)}(x)
-\widehat{\lambda}_\alpha^{(\mathcal D')}(x)\bigr|
\;\le\; \frac{C_3}{N}.
\end{equation}

For a fresh test pair $(X,S)$,
\[
Q(\mathcal D)
= \mathbb P\bigl(S\le \widehat{\lambda}_\alpha^{(\mathcal D)}(X)\mid \mathcal D\bigr)
= \mathbb E\!\left[F_{S\mid X}\bigl(\widehat{\lambda}_\alpha^{(\mathcal D)}(X)\bigr)
\;\middle|\; \mathcal D\right],
\]
where $F_{S\mid X=x}(\cdot)$ is the conditional CDF of $S$ given $X=x$.

By Assumption~(3), for each $x$ the map $t\mapsto F_{S\mid X=x}(t)$ is $L$-Lipschitz on $[0,1]$. Combining this with~\eqref{eq:lambda-stability} yields
\[
\bigl|Q(\mathcal D)-Q(\mathcal D')\bigr|
\le
L\,\sup_{x\in\mathcal X}
\bigl|\widehat{\lambda}_\alpha^{(\mathcal D)}(x)
-\widehat{\lambda}_\alpha^{(\mathcal D')}(x)\bigr|
\le
\frac{C_4}{N},
\]
where $C_4:=LC_3$.
Thus $Q(\mathcal D_{\mathrm{cal}})$ satisfies bounded differences with constants $c_i=C_4/N$ for $i=1,\dots,N$.

By McDiarmid’s inequality, for any $\varepsilon>0$,
\[
\mathbb P\!\left(
Q(\mathcal D_{\mathrm{cal}})
\le \mathbb E_{\mathcal D_{\mathrm{cal}}}\bigl[Q(\mathcal D_{\mathrm{cal}})\bigr]
- \varepsilon
\right)
\le
\exp\!\left(
-\frac{2N\varepsilon^2}{C_4^2}
\right).
\]
Given $\delta\in(0,1)$, set
\[
\varepsilon_N(\delta)
:= \frac{C_4}{\sqrt{2N}}\sqrt{\log\frac{1}{\delta}}
= O\!\left(\sqrt{\frac{\log(1/\delta)}{N}}\right).
\]
Then with probability at least $1-\delta$ over $\mathcal D_{\mathrm{cal}}$,
\begin{equation}
\label{eq:Q-high-prob}
Q(\mathcal D_{\mathrm{cal}})
\;\ge\;
\mathbb E_{\mathcal D_{\mathrm{cal}}}\bigl[Q(\mathcal D_{\mathrm{cal}})\bigr]
- \varepsilon_N(\delta).
\end{equation}

Combining~\eqref{eq:Q-high-prob} with~\eqref{eq:Q-mean-lower} yields
\[
Q(\mathcal D_{\mathrm{cal}})
\;\ge\; 1-\alpha - \varepsilon_N(\delta)
\]
with probability at least $1-\delta$, i.e.
\[
\mathbb{P}\!\left( Z(X,S)=1 \;\middle|\; \mathcal D_{\mathrm{cal}} \right)
\;\ge\; 1-\alpha - \varepsilon_N(\delta),
\]
Finally, we have $\alpha_{\mathrm{eff}}=\max\{0,\alpha-\varepsilon_N(\delta)\}=\alpha-\varepsilon_N(\delta)$ (the slack is a small term) from algorithm~\ref{alg:infer-pac}, then,
\[
\mathbb P\!\left(S\le \widehat\lambda_{\alpha_{\mathrm{eff}}}(X)\mid \mathcal D_{\mathrm{cal}}\right)
\ge
1-{\alpha_{\mathrm{eff}}}-\varepsilon_N(\delta)
=
1-\alpha,
\]
\end{proof}

\subsection{Proof of Proposition~\ref{prop:oracle-eff}}



\begin{proof}
Let
\[
u_0:=1-\alpha,
\qquad
u_{\bar\lambda}(t):=F_t(\bar\lambda).
\]
Since $\mathbb P(S\le \bar\lambda)=1-\alpha$, we have
\[
\mathbb E\big[u_{\bar\lambda}(T)\big]
=
\mathbb E\big[F_T(\bar\lambda)\big]
=
1-\alpha
=
u_0.
\]
By Assumption 2, the map $t\mapsto u_{\bar\lambda}(t)$ is nonincreasing.

Now,
\[
\mathbb E\big[G_X(\lambda^\star(X))\big]
=
\mathbb E\big[G_T(q_\alpha(T))\big]
=
\mathbb E\big[C_T(u_0)\big],
\]
because $q_\alpha(t)=F_t^{-1}(u_0)$, and also
\[
\mathbb E\big[G_X(\bar\lambda)\big]
=
\mathbb E\big[G_T(\bar\lambda)\big]
=
\mathbb E\big[C_T(u_{\bar\lambda}(T))\big].
\]

Set
\[
a(t):=u_{\bar\lambda}(t)-u_0.
\]
Then $a(t)$ is nonincreasing in $t$ and
\[
\mathbb E[a(T)]=0.
\]

By convexity of $C_t(\cdot)$, for each $t$,
\[
C_t(u_{\bar\lambda}(t))
\ge
C_t(u_0)+\partial_u C_t(u_0)\,\big(u_{\bar\lambda}(t)-u_0\big).
\]
Hence
\[
C_t(u_{\bar\lambda}(t))-C_t(u_0)
\ge
m(t)\,a(t),
\qquad
m(t):=\partial_u C_t(u_0).
\]
By Assumption 4, $m(t)$ is nonincreasing in $t$. Since both $m(t)$ and $a(t)$
are nonincreasing, Chebyshev's rearrangement inequality gives
\[
\mathbb E[m(T)a(T)]
\ge
\mathbb E[m(T)]\,\mathbb E[a(T)]
=
0.
\]
Taking expectations in the previous convexity bound yields
\[
\mathbb E\big[C_T(u_{\bar\lambda}(T))\big]
-
\mathbb E\big[C_T(u_0)\big]
\ge
\mathbb E[m(T)a(T)]
\ge 0.
\]
Therefore,
\[
\mathbb E\big[G_X(\bar\lambda)\big]
=
\mathbb E\big[C_T(u_{\bar\lambda}(T))\big]
\ge
\mathbb E\big[C_T(u_0)\big]
=
\mathbb E\big[G_X(\lambda^\star(X))\big].
\]

For strictness, if $\mathbb P(q_\alpha(X)\neq \bar\lambda_\alpha)>0$, then by
strict monotonicity of each $F_t$ we also have
\[
\mathbb P\big(F_T(\bar\lambda_\alpha)\neq 1-\alpha\big)>0.
\]
If $C_t(\cdot)$ is strictly convex for almost every $t$, then the supporting-line
inequality is strict on a set of positive probability, which implies
\[
\mathbb E\big[G_X(\lambda^\star(X))\big]
<
\mathbb E\big[G_X(\bar\lambda_\alpha)\big].
\]

\end{proof}

\subsection{Proof of Theorem~\ref{thm:cfc-eff}}

\begin{proof}
Let $T=\psi(X)$ and set
\[
u_0:=1-\alpha,
\qquad
\lambda^\star(X)=q_\alpha(T)=F_T^{-1}(u_0).
\]
By Proposition~\ref{prop:oracle-eff},
\begin{equation}
\label{eq:oracle-efficiency}
\mathbb E\big[G_X(\lambda^\star(X))\big]
\le
\mathbb E\big[G_X(\bar\lambda_\alpha)\big],
\end{equation}
with strict inequality under the additional strictness assumptions stated there.

It remains to show that
\[
\mathbb E\big[G_X(\widehat\lambda_{\alpha,N}(X))\big]
\longrightarrow
\mathbb E\big[G_X(\lambda^\star(X))\big].
\]

Fix $t$. Since $F_t$ is continuous and strictly increasing on $[0,1]$, we have
\[
F_t^{-1}(F_t(\lambda))=\lambda
\qquad\text{for all }\lambda\in[0,1].
\]
By definition,
\[
C_t(u)=G_t(F_t^{-1}(u)),
\]
so for every $\lambda\in[0,1]$,
\[
G_t(\lambda)=C_t(F_t(\lambda)).
\]
Now $u\mapsto C_t(u)$ is convex and differentiable on $(0,1)$ by
Proposition~\ref{prop:oracle-eff}, hence continuous on $(0,1)$.
Since
\[
F_t(\lambda^\star(t))=F_t\big(F_t^{-1}(u_0)\big)=u_0=1-\alpha\in(0,1),
\]
it follows that $\lambda\mapsto G_t(\lambda)$ is continuous at
$\lambda^\star(t)$.

Now fix $x\in\mathcal X$. By the assumed uniform consistency,
\[
\big|\widehat\lambda_{\alpha,N}(x)-\lambda^\star(x)\big|
\le
\sup_{z\in\mathcal X}
\big|\widehat\lambda_{\alpha,N}(z)-\lambda^\star(z)\big|
\xrightarrow{p}0,
\]
so
\[
\widehat\lambda_{\alpha,N}(x)\xrightarrow{p}\lambda^\star(x).
\]
Since $G_x(\cdot)$ is continuous at $\lambda^\star(x)$, the continuous mapping
theorem yields
\[
G_x\big(\widehat\lambda_{\alpha,N}(x)\big)
\xrightarrow{p}
G_x\big(\lambda^\star(x)\big).
\]

For $\varepsilon>0$, define
\[
p_{N,\varepsilon}(x)
:=
\mathbb P_{\mathcal D_{\mathrm{cal}}}\!\left(
\left|
G_x\big(\widehat\lambda_{\alpha,N}(x)\big)
-
G_x\big(\lambda^\star(x)\big)
\right|>\varepsilon
\right).
\]
Then $p_{N,\varepsilon}(x)\to 0$ for every fixed $x$, and
$0\le p_{N,\varepsilon}(x)\le 1$.
Since the fresh test point $X$ is independent of the calibration sample,
\[
\mathbb P\!\left(
\left|
G_X\big(\widehat\lambda_{\alpha,N}(X)\big)
-
G_X\big(\lambda^\star(X)\big)
\right|>\varepsilon
\right)
=
\mathbb E_X\big[p_{N,\varepsilon}(X)\big]
\longrightarrow 0
\]
by dominated convergence. Therefore,
\[
G_X\big(\widehat\lambda_{\alpha,N}(X)\big)
\xrightarrow{p}
G_X\big(\lambda^\star(X)\big).
\]

Moreover,
\[
0\le G_X\big(\widehat\lambda_{\alpha,N}(X)\big)\le 1,
\qquad
0\le G_X\big(\lambda^\star(X)\big)\le 1,
\]
so the sequence is uniformly integrable. Hence convergence in probability
upgrades to convergence in $L^1$, and therefore
\[
\lim_{N\to\infty}
\mathbb E\big[G_X(\widehat\lambda_{\alpha,N}(X))\big]
=
\mathbb E\big[G_X(\lambda^\star(X))\big].
\]
Combining this with~\eqref{eq:oracle-efficiency} proves
\[
\lim_{N\to\infty}\mathbb E\big[G_X(\widehat\lambda_{\alpha,N}(X))\big]
\le
\mathbb E\big[G_X(\bar\lambda_\alpha)\big],
\]
with strict inequality under the additional strictness assumptions from
Proposition~\ref{prop:oracle-eff}.

Finally, conditional on $X$ and a threshold $\lambda$, each of the $M$ sampled
candidates is accepted with probability $G_X(\lambda)$, so
\[
\mathbb E\big[\,|\widehat C_{\alpha,N}(X)|\,\big|\,X,\widehat\lambda_{\alpha,N}(X)\big]
=
M\,G_X\big(\widehat\lambda_{\alpha,N}(X)\big).
\]
Taking expectations gives
\[
\mathbb E\big[\,|\widehat C_{\alpha,N}(X)|\,\big]
=
M\,\mathbb E\big[G_X(\widehat\lambda_{\alpha,N}(X))\big].
\]
Passing to the limit yields
\[
\lim_{N\to\infty}\mathbb E\big[|\widehat C_{\alpha,N}(X)|\big]
=
M\,\mathbb E\big[G_X(\lambda^\star(X))\big]
\le
M\,\mathbb E\big[G_X(\bar\lambda_\alpha)\big],
\]
with strict inequality under the same additional assumptions.

\end{proof}

\section{Additional experiments}\label{app:exp-details}

\subsection{Synthetic Data}

\paragraph{Setup details.}
All synthetic appendix results use the same clean synthetic generator as the main text. Each prompt has a scalar difficulty variable $T\in[0,1]$, we draw $M$ candidates, and define the prompt-level latent success score as $S(X)=\min\{V_j:A_j=1\}$, with $S(X)=1$ if no sampled candidate is correct. Unless varied explicitly in the ablations, we use $N_{\mathrm{cal}}=N_{\mathrm{test}}=10{,}000$ and $M=50$. Throughout the synthetic appendix, ECR and GSC use the ground-truth correctness labels rather than the surrogate event $S(X)\le \widehat{\lambda}(X)$. The main synthetic comparisons use 10 equal-frequency bins for GSC, the threshold-adaptation plot below uses 5 bins for readability, and CFC-PAC uses the same stability-mode adjustment with $\delta=0.90$ as in the main synthetic experiments.

\paragraph{Threshold adaptation versus prompt difficulty.}
Figure~\ref{fig:lambda_vs_difficulty} visualizes the learned threshold as a function of prompt difficulty in the updated synthetic run at $\alpha=0.10$ using 5 difficulty bins. As expected, CFC assigns stricter thresholds to easy prompts and looser thresholds to hard prompts. This is exactly the adaptive behavior that a single global-threshold baseline cannot express.

\begin{figure}[t]
  \centering
  \includegraphics[width=0.50\linewidth]{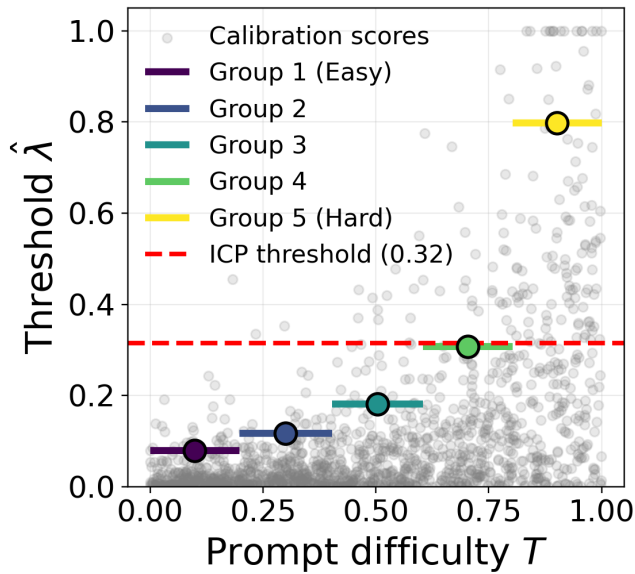}
  \caption{Learned threshold $\widehat{\lambda}_\alpha(X)$ versus prompt difficulty in the updated synthetic run ($\alpha=0.10$, 5 bins). Easy prompts receive stricter thresholds, while harder prompts receive looser thresholds, explaining the improved group-wise coverage of CFC relative to global-threshold baselines.}
  \label{fig:lambda_vs_difficulty}
\end{figure}

\paragraph{Full target-risk sweep.}
Table~\ref{tab:syn_cfc_new} reports the full synthetic sweep across target error rates. As above, ECR and GSC use ground-truth correctness labels, and CFC-PAC uses the stability-mode adjustment with $\delta=0.90$ throughout. We include \textsc{Learnt CP} in the full sweep to separate gains from learning a better threshold from gains due to exact conditional conformalization.

\begin{table*}[t]
\centering
\scriptsize
\renewcommand{\textbf}[1]{#1}
\caption{Results at different target error rates $\alpha$.}
\label{tab:syn_cfc_new}
\begin{tabular}{l ccc ccc}
\toprule
Methods &
\multicolumn{3}{c}{$\alpha = 0.10$} &
\multicolumn{3}{c}{$\alpha = 0.15$} \\
\cmidrule(lr){2-4}\cmidrule(lr){5-7}
& ECR & GSC$\uparrow$ & APSS$\downarrow$
& ECR & GSC$\uparrow$ & APSS$\downarrow$ \\
\midrule
\textbf{TopK} &
90.6 $\pm$ 0.1 & 58.2 $\pm$ 1.3 & 16.00 $\pm$ 0.00 &
85.0 $\pm$ 1.6 & 44.9 $\pm$ 2.3 & \textbf{10.80 $\pm$ 0.98} \\
\textbf{ICP} &
90.2 $\pm$ 0.2 & 57.4 $\pm$ 1.4 & 16.71 $\pm$ 0.23 &
85.3 $\pm$ 0.6 & 46.5 $\pm$ 1.4 & 12.11 $\pm$ 0.28 \\
\textbf{Learnt CP} &
90.2 $\pm$ 0.3 & 84.3 $\pm$ 0.5 & 15.72 $\pm$ 0.15 &
85.2 $\pm$ 0.6 & 77.4 $\pm$ 0.6 & 12.44 $\pm$ 0.17 \\
\textbf{CFC (ours)} &
90.3 $\pm$ 0.5 & 88.7 $\pm$ 0.7 & \textbf{15.53 $\pm$ 0.12} &
85.2 $\pm$ 0.6 & 82.7 $\pm$ 0.8 & 12.42 $\pm$ 0.09 \\
\textbf{CFC-PAC (ours)} &
90.8 $\pm$ 0.6 & \textbf{89.1 $\pm$ 0.5} & 15.87 $\pm$ 0.16 &
85.6 $\pm$ 0.6 & \textbf{83.4 $\pm$ 1.1} & 12.66 $\pm$ 0.07 \\
\midrule
Methods &
\multicolumn{3}{c}{$\alpha = 0.20$} &
\multicolumn{3}{c}{$\alpha = 0.25$} \\
\cmidrule(lr){2-4}\cmidrule(lr){5-7}
& ECR & GSC$\uparrow$ & APSS$\downarrow$
& ECR & GSC$\uparrow$ & APSS$\downarrow$ \\
\midrule
\textbf{TopK} &
79.7 $\pm$ 0.2 & 36.5 $\pm$ 0.9 & \textbf{8.00 $\pm$ 0.00} &
73.6 $\pm$ 0.2 & 29.4 $\pm$ 1.2 & \textbf{6.00 $\pm$ 0.00} \\
\textbf{ICP} &
80.3 $\pm$ 0.7 & 37.9 $\pm$ 1.6 & 9.32 $\pm$ 0.26 &
75.4 $\pm$ 0.8 & 31.6 $\pm$ 1.4 & 7.43 $\pm$ 0.21 \\
\textbf{Learnt CP} &
80.2 $\pm$ 0.6 & 71.2 $\pm$ 0.9 & 10.31 $\pm$ 0.09 &
75.4 $\pm$ 0.8 & 65.1 $\pm$ 1.6 & 8.72 $\pm$ 0.13 \\
\textbf{CFC (ours)} &
80.2 $\pm$ 0.6 & 77.4 $\pm$ 0.7 & 10.39 $\pm$ 0.07 &
75.3 $\pm$ 0.8 & 72.1 $\pm$ 1.2 & 8.80 $\pm$ 0.10 \\
\textbf{CFC-PAC (ours)} &
80.6 $\pm$ 0.6 & \textbf{78.1 $\pm$ 0.6} & 10.53 $\pm$ 0.07 &
75.7 $\pm$ 0.8 & \textbf{72.4 $\pm$ 1.2} & 8.89 $\pm$ 0.10 \\
\midrule
Methods &
\multicolumn{3}{c}{$\alpha = 0.30$} &
\multicolumn{3}{c}{$\alpha = 0.35$} \\
\cmidrule(lr){2-4}\cmidrule(lr){5-7}
& ECR & GSC$\uparrow$ & APSS$\downarrow$
& ECR & GSC$\uparrow$ & APSS$\downarrow$ \\
\midrule
\textbf{TopK} &
73.6 $\pm$ 0.2 & 29.4 $\pm$ 1.2 & 6.00 $\pm$ 0.00 &
64.2 $\pm$ 0.3 & 21.7 $\pm$ 1.4 & \textbf{4.00 $\pm$ 0.00} \\
\textbf{ICP} &
70.5 $\pm$ 0.8 & 27.1 $\pm$ 1.5 & \textbf{6.00 $\pm$ 0.19} &
65.6 $\pm$ 0.7 & 23.1 $\pm$ 1.0 & 4.91 $\pm$ 0.16 \\
\textbf{Learnt CP} &
70.4 $\pm$ 0.9 & 59.6 $\pm$ 1.7 & 7.37 $\pm$ 0.12 &
65.5 $\pm$ 1.0 & 54.7 $\pm$ 1.6 & 6.32 $\pm$ 0.14 \\
\textbf{CFC (ours)} &
70.4 $\pm$ 0.9 & 66.5 $\pm$ 1.4 & 7.53 $\pm$ 0.11 &
65.5 $\pm$ 0.8 & 61.3 $\pm$ 1.8 & 6.48 $\pm$ 0.10 \\
\textbf{CFC-PAC (ours)} &
70.7 $\pm$ 0.9 & \textbf{66.8 $\pm$ 1.3} & 7.60 $\pm$ 0.11 &
65.8 $\pm$ 0.9 & \textbf{61.5 $\pm$ 1.9} & 6.55 $\pm$ 0.11 \\
\midrule
Methods &
\multicolumn{3}{c}{$\alpha = 0.40$} &
\multicolumn{3}{c}{$\alpha = 0.45$} \\
\cmidrule(lr){2-4}\cmidrule(lr){5-7}
& ECR & GSC$\uparrow$ & APSS$\downarrow$
& ECR & GSC$\uparrow$ & APSS$\downarrow$ \\
\midrule
\textbf{TopK} &
64.2 $\pm$ 0.3 & 21.7 $\pm$ 1.4 & \textbf{4.00 $\pm$ 0.00} &
64.2 $\pm$ 0.3 & 21.7 $\pm$ 1.4 & 4.00 $\pm$ 0.00 \\
\textbf{ICP} &
60.8 $\pm$ 0.8 & 19.4 $\pm$ 0.7 & 4.06 $\pm$ 0.13 &
55.6 $\pm$ 1.0 & 16.5 $\pm$ 0.9 & \textbf{3.32 $\pm$ 0.13} \\
\textbf{Learnt CP} &
60.4 $\pm$ 0.9 & 49.6 $\pm$ 1.8 & 5.40 $\pm$ 0.13 &
55.7 $\pm$ 1.0 & 45.5 $\pm$ 1.6 & 4.66 $\pm$ 0.10 \\
\textbf{CFC (ours)} &
60.5 $\pm$ 0.7 & 56.4 $\pm$ 1.7 & 5.58 $\pm$ 0.08 &
55.6 $\pm$ 0.8 & 51.9 $\pm$ 1.5 & 4.83 $\pm$ 0.10 \\
\textbf{CFC-PAC (ours)} &
60.9 $\pm$ 0.7 & \textbf{56.6 $\pm$ 1.6} & 5.62 $\pm$ 0.09 &
56.0 $\pm$ 0.9 & \textbf{52.3 $\pm$ 1.5} & 4.88 $\pm$ 0.11 \\
\bottomrule
\end{tabular}
\end{table*}

\paragraph{Sensitivity to calibration size and sampling budget.}
Table~\ref{tab:ablation_bins10} reports a representative synthetic ablation of CFC and CFC-PAC as we vary the number of calibration points $N_{\text{cal}}$ and the sampling budget $M$, using the same 10-bin setting as the main synthetic comparison. For consistency with the main synthetic experiments, CFC-PAC uses the same stability-mode adjustment with $\delta=0.90$, and all reported coverages are true label-based coverages. Figure~\ref{fig:ablation_bins10} shows the corresponding group-level miscoverage profile.

\begin{table}[t]
\centering
\scriptsize
\renewcommand{\textbf}[1]{#1}
\caption{Ablation of CFC vs.\ CFC-PAC on synthetic data, with 10 bins.}
\label{tab:ablation_bins10}
\begin{tabular}{cc cccc}
\toprule
 & & \multicolumn{2}{c}{CFC} & \multicolumn{2}{c}{CFC-PAC} \\
\cmidrule(lr){3-4}\cmidrule(lr){5-6}
$N_{\text{cal}}$ & $M$ &
True cov. & Mean set &
True cov. & Mean set \\
\midrule
2000  & 50  & 0.799 $\pm$ 0.006 & 10.60 $\pm$ 0.35 & 0.808 $\pm$ 0.005 & 10.92 $\pm$ 0.30 \\
2000  & 100 & 0.809 $\pm$ 0.012 & 9.77 $\pm$ 0.31 & 0.819 $\pm$ 0.012 & 10.15 $\pm$ 0.29 \\
2000  & 150 & 0.796 $\pm$ 0.006 & 9.03 $\pm$ 0.43 & 0.807 $\pm$ 0.006 & 9.37 $\pm$ 0.43 \\
\midrule
5000  & 50  & 0.801 $\pm$ 0.004 & 10.45 $\pm$ 0.12 & 0.808 $\pm$ 0.004 & 10.69 $\pm$ 0.16 \\
5000  & 100 & 0.803 $\pm$ 0.008 & 9.52 $\pm$ 0.19 & 0.809 $\pm$ 0.007 & 9.74 $\pm$ 0.15 \\
5000  & 150 & 0.802 $\pm$ 0.006 & 9.13 $\pm$ 0.26 & 0.808 $\pm$ 0.007 & 9.35 $\pm$ 0.27 \\
\midrule
10000 & 50  & 0.802 $\pm$ 0.006 & 10.39 $\pm$ 0.07 & 0.806 $\pm$ 0.006 & 10.53 $\pm$ 0.07 \\
10000 & 100 & 0.803 $\pm$ 0.005 & 9.54 $\pm$ 0.06 & 0.808 $\pm$ 0.006 & 9.69 $\pm$ 0.09 \\
10000 & 150 & 0.799 $\pm$ 0.008 & 9.15 $\pm$ 0.15 & 0.803 $\pm$ 0.008 & 9.29 $\pm$ 0.15 \\
\bottomrule
\end{tabular}
\end{table}

\begin{figure}[t]
  \centering
  \includegraphics[width=0.75\linewidth]{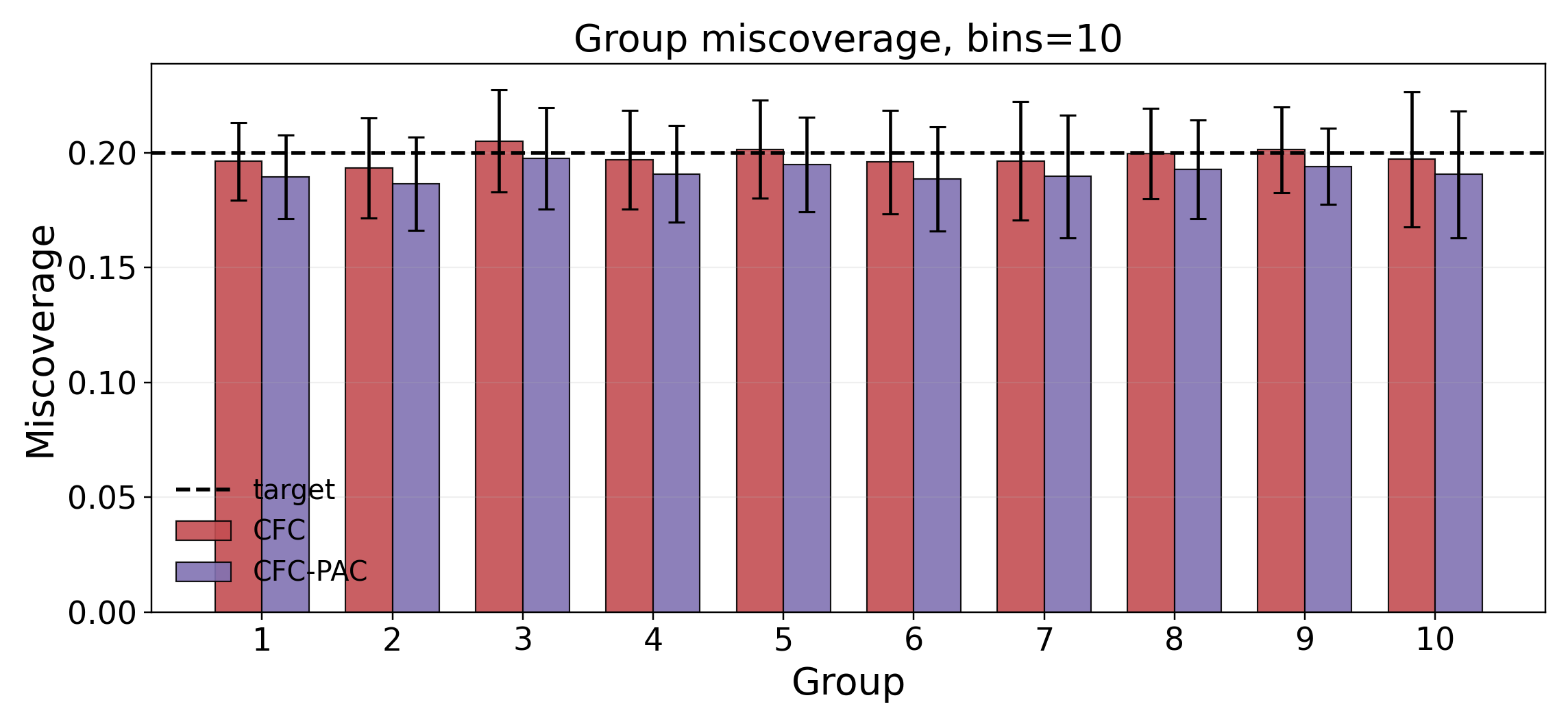}
  \caption{Group miscoverage for CFC and CFC-PAC, with 10 bins.}
  \label{fig:ablation_bins10}
\end{figure}

\subsection{Real-World Data}

\subsubsection{TriviaQA}

\paragraph{Full target-risk sweep.}
For the chosen TriviaQA feature map, we compute the rank-normalized answer-distribution entropy $T_{\mathrm{ent}}(X)$ and the rank-normalized maximum verifier loss $T_{\mathrm{loss}}(X)$ on the calibration split, then assign a prompt to the hard group when $\max\{T_{\mathrm{ent}}(X),T_{\mathrm{loss}}(X)\}\ge q_{0.925}$, where $q_{0.925}$ is the calibration $92.5$th percentile of that combined score. Table~\ref{tab:qrivalqa_full} reports the full TriviaQA sweep under this chosen feature map. The full table makes the main-paper tradeoff more explicit:  \textbf{CFC} is the most size-efficient of our methods, while \textbf{CFC-PAC-FULL} is the strongest at hitting the target coverage with a higher subgroup floor.

\begin{table*}[t]
\centering
\scriptsize
\renewcommand{\textbf}[1]{#1}
\caption{Full TriviaQA sweep across target error rates $\alpha$ under the calibration-defined split $\max\{T_{\mathrm{ent}}(X),T_{\mathrm{loss}}(X)\}\ge q_{0.925}$. APSS below $1$ indicates that the method abstains and returns the empty set on some prompts. ECR and GSC are reported in percent; for ECR, values closest to the target coverage $1-\alpha$ are preferred.}
\label{tab:qrivalqa_full}
\setlength{\tabcolsep}{3.5pt}
\begin{tabular}{l ccc ccc ccc}
\toprule
Methods &
\multicolumn{3}{c}{$\alpha = 0.20$} &
\multicolumn{3}{c}{$\alpha = 0.25$} &
\multicolumn{3}{c}{$\alpha = 0.30$} \\
\cmidrule(lr){2-4}\cmidrule(lr){5-7}\cmidrule(lr){8-10}
& ECR & GSC$\uparrow$ & APSS$\downarrow$
& ECR & GSC$\uparrow$ & APSS$\downarrow$
& ECR & GSC$\uparrow$ & APSS$\downarrow$ \\
\midrule
\textbf{TopK} &
81.6 $\pm$ 0.3 & 69.7 $\pm$ 1.5 & 1.75 $\pm$ 0.00 &
73.4 $\pm$ 0.3 & 55.9 $\pm$ 1.6 & 1.00 $\pm$ 0.00 &
73.4 $\pm$ 0.3 & 55.9 $\pm$ 1.6 & 1.00 $\pm$ 0.00 \\
\textbf{ICP} &
80.0 $\pm$ 0.5 & 65.1 $\pm$ 1.5 & 1.43 $\pm$ 0.02 &
74.9 $\pm$ 0.3 & 56.7 $\pm$ 1.9 & 1.08 $\pm$ 0.01 &
69.5 $\pm$ 0.8 & 49.3 $\pm$ 2.7 & 0.90 $\pm$ 0.02 \\
\textbf{Learnt CP} &
79.6 $\pm$ 0.5 & 76.3 $\pm$ 1.6 & 1.69 $\pm$ 0.04 &
74.7 $\pm$ 0.4 & 74.0 $\pm$ 1.1 & 1.22 $\pm$ 0.03 &
69.5 $\pm$ 0.7 & 68.7 $\pm$ 1.3 & 0.97 $\pm$ 0.02 \\
\textbf{CFC (ours)} &
76.2 $\pm$ 0.5 & 65.4 $\pm$ 1.7 & 1.21 $\pm$ 0.01 &
72.7 $\pm$ 0.4 & 65.2 $\pm$ 1.8 & 1.03 $\pm$ 0.03 &
68.4 $\pm$ 0.7 & 62.8 $\pm$ 2.0 & 0.88 $\pm$ 0.01 \\
\textbf{CFC-PAC (ours)} &
76.4 $\pm$ 0.5 & 65.4 $\pm$ 1.7 & 1.22 $\pm$ 0.01 &
73.1 $\pm$ 0.5 & 65.3 $\pm$ 1.8 & 1.05 $\pm$ 0.04 &
68.8 $\pm$ 0.7 & 62.9 $\pm$ 2.1 & 0.89 $\pm$ 0.02 \\
\textbf{CFC-FULL (ours)} &
79.6 $\pm$ 0.5 & 76.3 $\pm$ 1.6 & 1.69 $\pm$ 0.04 &
74.8 $\pm$ 0.3 & 74.1 $\pm$ 1.1 & 1.26 $\pm$ 0.09 &
69.6 $\pm$ 0.7 & 68.9 $\pm$ 1.1 & 0.97 $\pm$ 0.02 \\
\textbf{CFC-PAC-FULL (ours)} &
80.1 $\pm$ 0.5 & 76.3 $\pm$ 1.6 & 1.72 $\pm$ 0.04 &
75.3 $\pm$ 0.4 & 74.6 $\pm$ 1.0 & 1.32 $\pm$ 0.10 &
70.0 $\pm$ 0.8 & 69.2 $\pm$ 1.3 & 0.99 $\pm$ 0.03 \\
\midrule
Methods &
\multicolumn{3}{c}{$\alpha = 0.35$} &
\multicolumn{3}{c}{$\alpha = 0.40$} &
\multicolumn{3}{c}{$\alpha = 0.45$} \\
\cmidrule(lr){2-4}\cmidrule(lr){5-7}\cmidrule(lr){8-10}
& ECR & GSC$\uparrow$ & APSS$\downarrow$
& ECR & GSC$\uparrow$ & APSS$\downarrow$
& ECR & GSC$\uparrow$ & APSS$\downarrow$ \\
\midrule
\textbf{TopK} &
73.4 $\pm$ 0.3 & 55.9 $\pm$ 1.6 & 1.00 $\pm$ 0.00 &
73.4 $\pm$ 0.3 & 55.9 $\pm$ 1.6 & 1.00 $\pm$ 0.00 &
73.4 $\pm$ 0.3 & 55.9 $\pm$ 1.6 & 1.00 $\pm$ 0.00 \\
\textbf{ICP} &
64.5 $\pm$ 0.6 & 43.0 $\pm$ 2.1 & 0.78 $\pm$ 0.01 &
59.9 $\pm$ 0.9 & 36.1 $\pm$ 3.1 & 0.70 $\pm$ 0.01 &
54.8 $\pm$ 0.9 & 29.6 $\pm$ 2.3 & 0.62 $\pm$ 0.01 \\
\textbf{Learnt CP} &
64.6 $\pm$ 0.7 & 63.0 $\pm$ 2.1 & 0.82 $\pm$ 0.02 &
59.9 $\pm$ 0.8 & 58.2 $\pm$ 2.4 & 0.72 $\pm$ 0.01 &
55.2 $\pm$ 1.0 & 53.8 $\pm$ 1.7 & 0.65 $\pm$ 0.01 \\
\textbf{CFC (ours)} &
63.9 $\pm$ 0.7 & 59.2 $\pm$ 2.3 & 0.78 $\pm$ 0.01 &
59.5 $\pm$ 0.9 & 55.8 $\pm$ 2.7 & 0.70 $\pm$ 0.01 &
54.9 $\pm$ 1.0 & 52.3 $\pm$ 2.4 & 0.63 $\pm$ 0.01 \\
\textbf{CFC-PAC (ours)} &
64.3 $\pm$ 0.7 & 59.5 $\pm$ 2.4 & 0.78 $\pm$ 0.01 &
59.9 $\pm$ 0.8 & 56.4 $\pm$ 2.9 & 0.70 $\pm$ 0.01 &
55.4 $\pm$ 1.0 & 52.8 $\pm$ 2.6 & 0.64 $\pm$ 0.01 \\
\textbf{CFC-FULL (ours)} &
64.7 $\pm$ 0.7 & 63.2 $\pm$ 2.0 & 0.82 $\pm$ 0.02 &
59.9 $\pm$ 0.9 & 58.4 $\pm$ 2.2 & 0.72 $\pm$ 0.01 &
55.2 $\pm$ 1.0 & 53.8 $\pm$ 1.7 & 0.65 $\pm$ 0.01 \\
\textbf{CFC-PAC-FULL (ours)} &
65.1 $\pm$ 0.8 & 63.7 $\pm$ 2.2 & 0.83 $\pm$ 0.02 &
60.4 $\pm$ 0.8 & 59.0 $\pm$ 2.2 & 0.73 $\pm$ 0.01 &
55.7 $\pm$ 1.0 & 54.3 $\pm$ 1.9 & 0.65 $\pm$ 0.01 \\
\bottomrule
\end{tabular}
\setlength{\tabcolsep}{6pt}
\end{table*}

\subsubsection{GSM8K}

\paragraph{Full target-risk sweep.}
Table~\ref{tab:gsm8k_full} reports the full GSM8K target-risk sweep for the chosen setting from the main paper: we keep the first $5$ sampled candidates per prompt, define $T(X)$ as the mean verifier loss across those candidates, and use the quadratic basis $\Phi(X)=[1,T(X),T(X)^2]$. The same qualitative pattern holds across the sweep: the conditional methods remain much more efficient than ICP while sharply improving worst-group coverage.

\begin{table*}[t]
\centering
\scriptsize
\renewcommand{\textbf}[1]{#1}
\caption{Full GSM8K sweep across target error rates $\alpha$ using the first five sampled candidates per prompt, $T(X)$ equal to mean verifier loss, and the quadratic basis $\Phi(X)=[1,T(X),T(X)^2]$. ECR and GSC are reported in percent; for ECR, values closest to the target coverage $1-\alpha$ are preferred.}
\label{tab:gsm8k_full}
\setlength{\tabcolsep}{3.0pt}
\begin{tabular}{l ccc ccc ccc}
\toprule
Methods &
\multicolumn{3}{c}{$\alpha = 0.05$} &
\multicolumn{3}{c}{$\alpha = 0.10$} &
\multicolumn{3}{c}{$\alpha = 0.15$} \\
\cmidrule(lr){2-4}\cmidrule(lr){5-7}\cmidrule(lr){8-10}
& ECR & GSC$\uparrow$ & APSS$\downarrow$
& ECR & GSC$\uparrow$ & APSS$\downarrow$
& ECR & GSC$\uparrow$ & APSS$\downarrow$ \\
\midrule
\textbf{TopK} & 96.42 $\pm$ 0.39 & 86.52 $\pm$ 1.11 & 1.00 $\pm$ 0.00 & 96.42 $\pm$ 0.39 & 86.52 $\pm$ 1.11 & 1.00 $\pm$ 0.00 & 96.42 $\pm$ 0.39 & 86.52 $\pm$ 1.11 & 1.00 $\pm$ 0.00 \\
\textbf{ICP} & 95.09 $\pm$ 1.42 & 79.85 $\pm$ 6.53 & 4.73 $\pm$ 0.09 & 90.39 $\pm$ 1.44 & 56.36 $\pm$ 6.20 & 4.36 $\pm$ 0.08 & 83.91 $\pm$ 2.15 & 27.73 $\pm$ 8.14 & 3.97 $\pm$ 0.11 \\
\textbf{Learnt CP} & 94.91 $\pm$ 1.03 & 88.48 $\pm$ 3.05 & 4.01 $\pm$ 0.98 & 90.09 $\pm$ 1.38 & 86.06 $\pm$ 0.77 & 2.22 $\pm$ 0.07 & 84.70 $\pm$ 1.11 & 79.09 $\pm$ 1.56 & 1.92 $\pm$ 0.06 \\
\textbf{CFC (ours)} & 94.82 $\pm$ 0.97 & 88.48 $\pm$ 2.32 & 2.35 $\pm$ 0.43 & 90.18 $\pm$ 1.41 & 86.36 $\pm$ 1.07 & 1.49 $\pm$ 0.04 & 84.97 $\pm$ 1.12 & 79.55 $\pm$ 1.52 & 1.34 $\pm$ 0.04 \\
\textbf{CFC-FULL (ours)} & 95.03 $\pm$ 0.97 & 88.94 $\pm$ 2.74 & 4.08 $\pm$ 0.93 & 90.30 $\pm$ 1.40 & 86.36 $\pm$ 1.07 & 2.23 $\pm$ 0.08 & 85.09 $\pm$ 1.03 & 79.55 $\pm$ 1.52 & 1.96 $\pm$ 0.06 \\
\textbf{CFC-PAC (ours)} & 95.03 $\pm$ 1.33 & 88.18 $\pm$ 2.47 & 2.59 $\pm$ 0.29 & 91.79 $\pm$ 1.66 & 88.18 $\pm$ 1.41 & 1.55 $\pm$ 0.06 & 86.64 $\pm$ 0.95 & 81.67 $\pm$ 1.11 & 1.38 $\pm$ 0.03 \\
\textbf{CFC-PAC-FULL (ours)} & 95.24 $\pm$ 1.40 & 88.79 $\pm$ 3.01 & 4.59 $\pm$ 0.62 & 91.91 $\pm$ 1.68 & 88.48 $\pm$ 1.62 & 2.34 $\pm$ 0.11 & 86.76 $\pm$ 0.88 & 81.67 $\pm$ 1.11 & 2.05 $\pm$ 0.05 \\
\midrule
Methods &
\multicolumn{3}{c}{$\alpha = 0.20$} &
\multicolumn{3}{c}{$\alpha = 0.25$} &
\multicolumn{3}{c}{$\alpha = 0.30$} \\
\cmidrule(lr){2-4}\cmidrule(lr){5-7}\cmidrule(lr){8-10}
& ECR & GSC$\uparrow$ & APSS$\downarrow$
& ECR & GSC$\uparrow$ & APSS$\downarrow$
& ECR & GSC$\uparrow$ & APSS$\downarrow$ \\
\midrule
\textbf{TopK} & 96.42 $\pm$ 0.39 & 86.52 $\pm$ 1.11 & 1.00 $\pm$ 0.00 & 96.42 $\pm$ 0.39 & 86.52 $\pm$ 1.11 & 1.00 $\pm$ 0.00 & 96.42 $\pm$ 0.39 & 86.52 $\pm$ 1.11 & 1.00 $\pm$ 0.00 \\
\textbf{ICP} & 79.55 $\pm$ 2.91 & 18.48 $\pm$ 4.92 & 3.69 $\pm$ 0.16 & 74.64 $\pm$ 2.53 & 13.48 $\pm$ 1.30 & 3.38 $\pm$ 0.15 & 70.18 $\pm$ 2.51 & 10.15 $\pm$ 0.91 & 3.11 $\pm$ 0.13 \\
\textbf{Learnt CP} & 79.42 $\pm$ 1.87 & 71.36 $\pm$ 3.63 & 1.74 $\pm$ 0.10 & 74.82 $\pm$ 2.51 & 66.97 $\pm$ 2.90 & 1.57 $\pm$ 0.07 & 69.36 $\pm$ 2.48 & 60.91 $\pm$ 2.23 & 1.42 $\pm$ 0.08 \\
\textbf{CFC (ours)} & 79.94 $\pm$ 1.80 & 72.12 $\pm$ 3.85 & 1.22 $\pm$ 0.04 & 75.03 $\pm$ 2.52 & 67.12 $\pm$ 2.86 & 1.12 $\pm$ 0.05 & 69.55 $\pm$ 2.53 & 61.21 $\pm$ 2.11 & 1.01 $\pm$ 0.05 \\
\textbf{CFC-FULL (ours)} & 80.06 $\pm$ 1.78 & 72.12 $\pm$ 3.85 & 1.76 $\pm$ 0.08 & 75.15 $\pm$ 2.47 & 67.12 $\pm$ 2.86 & 1.60 $\pm$ 0.07 & 69.64 $\pm$ 2.48 & 61.21 $\pm$ 2.11 & 1.44 $\pm$ 0.09 \\
\textbf{CFC-PAC (ours)} & 81.36 $\pm$ 1.68 & 73.18 $\pm$ 4.27 & 1.25 $\pm$ 0.04 & 75.94 $\pm$ 2.47 & 68.03 $\pm$ 2.81 & 1.13 $\pm$ 0.05 & 70.82 $\pm$ 2.58 & 62.58 $\pm$ 2.01 & 1.04 $\pm$ 0.05 \\
\textbf{CFC-PAC-FULL (ours)} & 81.48 $\pm$ 1.65 & 73.18 $\pm$ 4.27 & 1.82 $\pm$ 0.08 & 76.06 $\pm$ 2.42 & 68.03 $\pm$ 2.81 & 1.63 $\pm$ 0.07 & 70.94 $\pm$ 2.52 & 62.58 $\pm$ 2.01 & 1.48 $\pm$ 0.08 \\
\bottomrule
\end{tabular}
\setlength{\tabcolsep}{6pt}
\end{table*}

\paragraph{Candidate-budget ablation.}
Table~\ref{tab:gsm8k_n_ablation} compares the chosen $N=5$ budget against $N=20$ at the representative target $\alpha=0.10$, keeping the same mean-loss proxy and quadratic basis. The larger candidate budget barely changes target calibration, but it inflates APSS substantially for every threshold-based method. This is why the main paper uses the smaller budget on GSM8K: the extra samples add little new diversity but materially hurt efficiency.

\begin{table*}[t]
\centering
\scriptsize
\renewcommand{\textbf}[1]{#1}
\caption{GSM8K sample-budget ablation at $\alpha=0.10$ under the mean-loss quadratic rule. Each budget uses the same basis $\Phi(X)=[1,T(X),T(X)^2]$, differing only in the number of retained sampled candidates per prompt.}
\label{tab:gsm8k_n_ablation}
\begin{tabular}{l ccc ccc}
\toprule
Methods &
\multicolumn{3}{c}{$N=5$} &
\multicolumn{3}{c}{$N=20$} \\
\cmidrule(lr){2-4}\cmidrule(lr){5-7}
& ECR & GSC$\uparrow$ & APSS$\downarrow$
& ECR & GSC$\uparrow$ & APSS$\downarrow$ \\
\midrule
\textbf{TopK} & 96.42 $\pm$ 0.39 & 86.52 $\pm$ 1.11 & 1.00 $\pm$ 0.00 & 96.70 $\pm$ 0.36 & 88.48 $\pm$ 1.11 & 1.00 $\pm$ 0.00 \\
\textbf{ICP} & 90.39 $\pm$ 1.44 & 56.36 $\pm$ 6.20 & 4.36 $\pm$ 0.08 & 90.15 $\pm$ 1.37 & 55.76 $\pm$ 5.52 & 16.75 $\pm$ 0.35 \\
\textbf{Learnt CP} & 90.09 $\pm$ 1.38 & 86.06 $\pm$ 0.77 & 2.22 $\pm$ 0.07 & 90.15 $\pm$ 0.81 & 87.42 $\pm$ 1.41 & 7.24 $\pm$ 0.33 \\
\textbf{CFC (ours)} & 90.18 $\pm$ 1.41 & 86.36 $\pm$ 1.07 & \textbf{1.49 $\pm$ 0.04} & 90.30 $\pm$ 0.63 & 87.73 $\pm$ 1.47 & \textbf{3.79 $\pm$ 0.12} \\
\textbf{CFC-FULL (ours)} & 90.30 $\pm$ 1.40 & 86.36 $\pm$ 1.07 & 2.23 $\pm$ 0.08 & 90.45 $\pm$ 0.70 & 87.73 $\pm$ 1.47 & 7.50 $\pm$ 0.18 \\
\textbf{CFC-PAC (ours)} & 91.79 $\pm$ 1.66 & 88.18 $\pm$ 1.41 & 1.55 $\pm$ 0.06 & 91.91 $\pm$ 0.46 & 88.64 $\pm$ 2.30 & 3.99 $\pm$ 0.07 \\
\textbf{CFC-PAC-FULL (ours)} & 91.91 $\pm$ 1.68 & 88.48 $\pm$ 1.62 & 2.34 $\pm$ 0.11 & 92.06 $\pm$ 0.57 & 88.79 $\pm$ 2.46 & 7.97 $\pm$ 0.03 \\
\bottomrule
\end{tabular}
\end{table*}

\subsubsection{Flickr8k}

\paragraph{Full target-risk sweep.}
Table~\ref{tab:flickr8k_vlm} reports the full Flickr8k sweep for the chosen clean setting from the main paper: we keep up to two cached candidates per image, define $T(X)$ as the mean verifier loss across those candidates, and use the quadratic basis $\Phi(X)=[1,T(X),T(X)^2]$. This benchmark is visibly easier than GSM8K or TriviaQA, so the most informative comparison is closeness to the target coverage together with subgroup reliability. At $\alpha=0.03$ (target coverage $97\%$), \textbf{CFC-PAC-FULL} is the closest full-set variant to target while improving GSC over every baseline, whereas  \textbf{CFC} collapses to almost one caption per image and is best viewed as the smallest-size extreme.

\begin{table*}[t]
\centering
\scriptsize
\renewcommand{\textbf}[1]{#1}
\caption{Full Flickr8k sweep with \textsc{Qwen2-VL-7B-Instruct} using up to two cached candidates per image, $T(X)$ equal to mean verifier loss, and the quadratic basis $\Phi(X)=[1,T(X),T(X)^2]$. ECR and GSC are reported in percent; for ECR, values closest to the target coverage $1-\alpha$ are preferred.}
\label{tab:flickr8k_vlm}
\setlength{\tabcolsep}{3.0pt}
\begin{tabular}{l ccc ccc ccc}
\toprule
Methods &
\multicolumn{3}{c}{$\alpha = 0.01$} &
\multicolumn{3}{c}{$\alpha = 0.02$} &
\multicolumn{3}{c}{$\alpha = 0.03$} \\
\cmidrule(lr){2-4}\cmidrule(lr){5-7}\cmidrule(lr){8-10}
& ECR & GSC$\uparrow$ & APSS$\downarrow$
& ECR & GSC$\uparrow$ & APSS$\downarrow$
& ECR & GSC$\uparrow$ & APSS$\downarrow$ \\
\midrule
\textbf{TopK} & 97.75 $\pm$ 0.19 & 95.62 $\pm$ 0.71 & 2.00 $\pm$ 0.00 & 97.75 $\pm$ 0.19 & 95.62 $\pm$ 0.71 & 2.00 $\pm$ 0.00 & 96.37 $\pm$ 0.17 & 93.23 $\pm$ 0.47 & 1.00 $\pm$ 0.00 \\
\textbf{ICP} & 97.29 $\pm$ 0.29 & 93.33 $\pm$ 1.45 & 1.93 $\pm$ 0.01 & 96.25 $\pm$ 0.66 & 88.23 $\pm$ 3.66 & 1.87 $\pm$ 0.02 & 95.58 $\pm$ 0.54 & 85.21 $\pm$ 3.14 & 1.84 $\pm$ 0.01 \\
\textbf{Learnt CP} & 97.39 $\pm$ 0.33 & 95.52 $\pm$ 0.85 & 1.74 $\pm$ 0.06 & 97.06 $\pm$ 0.31 & 95.10 $\pm$ 0.63 & 1.34 $\pm$ 0.07 & 96.16 $\pm$ 0.17 & 94.48 $\pm$ 0.26 & 1.22 $\pm$ 0.07 \\
\textbf{CFC (ours)} & 96.37 $\pm$ 0.17 & 93.23 $\pm$ 0.47 & \textbf{1.00 $\pm$ 0.00} & 96.37 $\pm$ 0.17 & 93.23 $\pm$ 0.47 & \textbf{1.00 $\pm$ 0.00} & 95.81 $\pm$ 0.38 & 93.23 $\pm$ 0.47 & \textbf{0.99 $\pm$ 0.00} \\
\textbf{CFC-FULL (ours)} & 97.66 $\pm$ 0.16 & 95.62 $\pm$ 0.71 & 1.86 $\pm$ 0.04 & 97.27 $\pm$ 0.21 & 95.21 $\pm$ 0.77 & 1.40 $\pm$ 0.06 & 96.27 $\pm$ 0.24 & 94.58 $\pm$ 0.26 & 1.25 $\pm$ 0.08 \\
\textbf{CFC-PAC (ours)} & 96.37 $\pm$ 0.17 & 93.23 $\pm$ 0.47 & \textbf{1.00 $\pm$ 0.00} & 96.37 $\pm$ 0.17 & 93.23 $\pm$ 0.47 & \textbf{1.00 $\pm$ 0.00} & 96.37 $\pm$ 0.17 & 93.23 $\pm$ 0.47 & 1.00 $\pm$ 0.00 \\
\textbf{CFC-PAC-FULL (ours)} & 97.75 $\pm$ 0.19 & 95.62 $\pm$ 0.71 & 2.00 $\pm$ 0.00 & 97.64 $\pm$ 0.13 & 95.62 $\pm$ 0.71 & 1.86 $\pm$ 0.06 & 97.27 $\pm$ 0.21 & 95.21 $\pm$ 0.77 & 1.42 $\pm$ 0.07 \\
\bottomrule
\end{tabular}
\setlength{\tabcolsep}{6pt}
\end{table*}

\paragraph{Setting ablation.}
Table~\ref{tab:flickr8k_ablation} compares the chosen Flickr8k setting against two nearby alternatives from the clean search at the representative target $\alpha=0.03$. The chosen $N=2$ mean-loss quadratic rule is the best-balanced option we found: it keeps the  \textbf{CFC} variant essentially at single-caption size, while \textbf{CFC-PAC-FULL} remains close to target without the larger APSS jump of the $N=3$ alternative.

\begin{table}[t]
\centering
\scriptsize
\renewcommand{\textbf}[1]{#1}
\caption{Flickr8k setting ablation at $\alpha=0.03$ (target coverage $97\%$). Each cell reports ECR/APSS/GSC for the corresponding method.}
\label{tab:flickr8k_ablation}
\begin{tabular}{lcc}
\toprule
Setting & CFC & CFC-PAC-FULL \\
\midrule
$N=2$, max-loss, poly2 & 96.02 / 1.00 / 93.02 & 97.62 / 1.89 / \textbf{95.93} \\
Chosen: $N=2$, mean-loss, poly2 & 95.81 / \textbf{0.99} / 93.23 & 97.27 / \textbf{1.42} / 95.21 \\
$N=3$, mean-loss, poly2 & 96.00 / 1.00 / 93.02 & 97.81 / 2.16 / 96.35 \\
\bottomrule
\end{tabular}
\end{table}

\end{document}